\DeclareMathOperator*{\htheta}{\hat{\theta}}
\begin{document}
\title{Efficient Hyperdimensional Computing}
%
%

\author{Zhanglu Yan,  Shida Wang\textsuperscript{(\Letter)}, Kaiwen Tang, Weng-Fai Wong}
\tocauthor{Zhanglu Yan,  Shida Wang, Kaiwen Tang, Weng-Fai Wong}
\institute{National University of Singapore, Singapore \\ \texttt{ \{zhangluyan, tang\_kaiwen, wongwf\}@comp.nus.edu.sg,\{shida\_wang\}@u.nus.edu}}

\maketitle              

\begin{abstract}

Hyperdimensional computing (HDC) is a method to perform classification that uses binary vectors with high dimensions and the majority rule. This approach has the potential to be energy-efficient and hence deemed suitable for resource-limited platforms due to its simplicity and massive parallelism. However, in order to achieve high accuracy, HDC sometimes uses hypervectors with tens of thousands of dimensions. This potentially negates its efficiency advantage.
In this paper, we examine the necessity of such high dimensions and conduct a detailed theoretical analysis of the relationship between hypervector dimensions and accuracy. Our results demonstrate that as the dimension of the hypervectors increases, the worst-case/average-case HDC prediction accuracy with the majority rule decreases. Building on this insight, we develop HDC models that use binary hypervectors with dimensions orders of magnitude lower than those of state-of-the-art HDC models while maintaining equivalent or even improved accuracy and efficiency. For instance, on the MNIST dataset, we achieve 91.12\% HDC accuracy in image classification with a dimension of only 64. Our methods perform operations that are only 0.35\% of other HDC models with dimensions of 10,000. Furthermore, we evaluate our methods on ISOLET, UCI-HAR, and Fashion-MNIST datasets and investigate the limits of HDC computing~\footnote{https://github.com/zhangluyan9/EffHDC}. 

\keywords{Hyperdimension Computing \and Energy efficient computing.}
\end{abstract}
\section{Introduction}

{\em Hyperdimensional computing} (HDC) is a novel learning paradigm that takes inspiration from the abstract representation of neuron activity in the human brain. HDCs use high-dimensional binary vectors, and they offer several advantages over other well-known training methods like artificial neural networks (ANNs). One of the advantages of HDCs is their ability to achieve high parallelism and low energy consumption, which makes them an ideal choice for resource-constrained applications such as electroencephalogram detection, robotics, language recognition, and federated learning. Several studies have shown that HDCs are highly efficient in these applications ~\cite{hsieh2021fl,asgarinejad2020detection,neubert2019introduction,rahimi2016robust}. Moreover, HDCs are relatively easy to implement in hardware~\cite{schmuck2019hardware,salamat2019f5}, which adds to their appeal as a practical solution for real-world problems, especially in embedded devices.

Unfortunately, the practical deployment of HDC
suffers from low model accuracy and is always restricted to small and simple datasets. To solve the problem, one commonly used technique is increasing the hypervector dimension~\cite{neubert2019introduction,schlegel2022comparison,yu2022understanding}. For example, running on the MNIST dataset, hypervector dimensions of 10,000 are often used. \cite{duan2022lehdc} and \cite{yu2022understanding} achieved the state-of-the-art accuracies of 94.74\% and 95.4\%, respectively. 
In these and other state-of-the-art HDC works, hypervectors are randomly drawn from the hyperspace  $\{-1,+1\}^d$, where the dimension $d$ is very high. This ensures high orthogonality, making the hypervectors more independent and easier to distinguish from each other~\cite{thomas2020theoretical}. As a result, accuracy is improved and more complex application scenarios can be targeted. However, the price paid for the higher dimension is in higher energy consumption, possibly negating the advantage of HDC altogether~\cite{neubert2019introduction}. This paper addresses this trade-off and well as suggests a way to make use of it to improve HDC.


In this paper, we will analyze the relationship between hypervector dimensions and accuracy.
It is intuitively true that high dimensions will lead to higher orthogonality~\cite{thomas2020theoretical}. However, contrary to popular belief, we found that as the dimension of the hypervectors $d$ increases, the
upper bound for inference worst-case accuracy and average-case accuracy actually {\em decreases} (Theorem~\ref{prop1} and Theorem~\ref{prop2}). In particular, if the hypervector dimension $d$ is sufficient to represent a vector with $K$ classes (in particular, $d > \log_2 K$) then \textbf{the lower the dimension, the higher the accuracy.} 

Based on our analysis, we utilized the fully-connected network (FCN) with integer weight and binary activation as the encoder. Our research has shown that this encoder is equivalent to traditional HDC encoding methods, as demonstrated in Section ~\ref{sec:low-d-h-t}. Additionally, we will be learning the representation of each class through the majority rule. This will reduce the hypervector dimension while still maintaining the state-of-the-art accuracies.

When running on the MNIST dataset, we were able to achieve HDC accuracies of 91.12/91.96\% with hypervector dimensions of only 64/128. Also, the total number of calculation operations required by our method ($d=64$) was only 0.35\% of what was previously needed by related works that achieved the state-of-the-art performance. These prior methods relied on hypervector dimensions of 10,000 or more. Our analysis and experiments conclusively show that such high dimensions are not necessary.

The contribution of this paper is as follows:
\begin{itemize}
    \item We give a comprehensive analysis of the relationship between hypervector dimension and the accuracy of HDC. Both the worst-case and average-case accuracy are studied. Mathematically, we explain why relatively lower dimensions can yield higher model accuracies. 
    
    \item After conducting our analysis, we have found that our methods can achieve similar detection accuracies to the state-of-the-art, while using much smaller hypervector dimensions (latency). For instance, by utilizing a dimension of just 64 on the widely-used MNIST dataset, we were able to achieve an HDC accuracy of 91.12\%. 
    
    \item We have also confirmed the effectiveness of our approach on other datasets commonly used to evaluate HDC, including ISOLET, UCI-HAR, and Fashion-MNIST, achieving state-of-the-art accuracies even with quite low dimensions. Overall, our findings demonstrate the potential of our methods in reducing computational overhead while maintaining high detection accuracy.
    
\end{itemize}

\paragraph{Organisation}
This paper is organized as follows. For completeness, we first introduce the basic workflow and background of HDC. In Section~\ref{sec:Method}, we present our main dimension-accuracy analysis and two HDC retraining approaches. To evaluate the effectiveness of our proposed methods, we conduct experiments and compare our results with state-of-the-art HDC models in Section~\ref{sec:exp}. Finally, we discuss the implications of our findings and conclude the paper.


\section{Background}

Hyperdimensional computing (HDC) is a technique that represents data using binary hypervectors with dimensions typically ranging from 5,000 to 10,000. For example, when working with the MNIST dataset, each flattened image $x \in \mathbb{R}^{784}$ is encoded into a hypervector $r \in \mathbb{R}^d$ using a binding operation that combines value hypervectors $v$ with position vectors $p$ and takes their summation. 

Both these two hypervectors $\mathbf{v, p}$ are independently drawn from the hyperspace  $\{-1,+1\}^d$ randomly. Mathematically, we can construct representation $r$ for each image as followed:
\begin{equation}
\label{eq:hdc_encoding}
    r = \textrm{sgn} \left ((v_{x_0}\bigotimes p_{x_0} + v_{x_1}\bigotimes p_{x_1}+ \cdots + v_{x_{783}}\bigotimes p_{x_{783}}) \right ),
\end{equation}

\noindent
where the sign function `$\textrm{sgn}(\cdot)$' is used to binarize the sum of the hypervectors, returning either -1 or 1. When the sum equals to zero, $\textrm{sgn}(0)$ is randomly assigned either 1 or -1. In addition, the binding operation $\bigotimes$ performs element-wise multiplication between hypervectors. For instance, $[-1,1,1,-1]\bigotimes[1,1,1,-1] = [-1,1,1,1]$.


During training, hypervectors $r_1, r_2, ..., r_{60,000}$ belonging to the same class are added together. The resulting sum is then used to generate a representation $R_i$ for class $i$, using the "majority rule" approach. The data belonging to class $i$ is denoted by $C_i$.

\begin{equation}
\label{r_c}
R_i = \textrm{sgn} \left (\sum_{x \in C_i} r_i \right ).
\end{equation}

During inference, the encoded test image is compared to each class representation $R_c$, and the most similar one is selected as the predicted class. Various similarity measures such as cosine similarity, L2 distance, and Hamming distance have been used in previous works. In this work, we use the inner product as the similarity measure for binary hypervectors with values of -1 and 1, as it is equivalent to the Hamming distance, as noted in~\cite{frady2021computing}. 

\section{High dimensions are not necessary} 
\label{sec:Method}

Contrary to traditional results that suggest higher-dimensional models have lower error rates, the majority rule's higher representation dimension in HDC domain does not always lead to better results. Our research demonstrates that if a dataset can be linearly separated and embedded into a $d$-dimensional vector space, higher-dimensional representation may actually reduce classification accuracy. This discovery prompted us to explore the possibility of discovering a low-dimensional representation of the dataset. Our numerical experiments support our theoretical discovery, as the accuracy curve aligns with our findings.






\subsection{Dimension-accuracy analysis}

Based on the assumption that the dataset can be linearly-separably embedded into a $d$-dimensional space, we here investigate the dimension-accuracy relationship for the majority rule. 

We further assume the encoded hypervectors are uniformly distributed over a $d$-dimensional unit ball:
\begin{equation*}
    B^d = \{r \in \mathbb{R}^d \big | \|r\|_2 \leq 1\}.
\end{equation*}
Moreover, we assume that hypervectors $x$ are {\em linearly separable} and each class with label $i$ can be represented by $C_i$:
\begin{equation*}
    C_i = \{r \in \mathcal{X} | R_i \cdot r > R_j \cdot r, j \neq i\}, \quad 1 \leq i \leq K
\end{equation*}
where $R_i \in [0, 1]^d $ are support hypervectors that are used to distinguish classes $i$ from other classes. 

Selecting a sufficiently large $d$ to embed the raw data into a $d$-dimensional unit ball is crucial for this approach to work effectively. This assumption is reasonable because with a large enough $d$, we can ensure that the raw data can be accurately mapped into a high-dimensional space where the support hypervectors can distinguish between different classes. 

 Similarly, we define the prediction class  $\hat{C}_i$ by $\hat{R}_i$ as followed:
\begin{equation*}
    \hat{C}_i = \{r \in \mathcal{X} | \hat{R}_i \cdot r > \hat{R}_j \cdot r, j \neq i\}, \quad 1 \leq i \leq K.
\end{equation*}
When we apply the majority rule to separate the above hypervectors $x$, we are approximating $R_i$ with $\hat{R}_i$ in the sense of maximizing the prediction accuracy.
Here each $\hat{R}_i \in \{0, 1\}^d$ is a binary vector.

Therefore we define the worst-case $K$-classes prediction accuracy over hypervectors distribution $\mathcal{X}$ in the following expression:
\begin{equation*}
    Acc^w_{K, d} := \inf_{R_1, R_2, \dots, R_K} \sup_{\hat{R}_1, \hat{R}_2, \dots, \hat{R}_K} \mathbb{E}_r \bigg [ \sum_{i=1}^K \prod_{j \neq i} \mathbf{1}_{\{R_i \cdot r > R_j \cdot r\}} \mathbf{1}_{\{\hat{R}_i \cdot r > \hat{R}_j \cdot r\}} \bigg ].
\end{equation*}

\begin{theorem}
\label{prop1}
    Assume $K = 2$, as the dimension of the hypervectors $d$ increases, the worst-case prediction accuracy decreases with the following rate:
    \begin{align*}
        Acc^w_{2, d} & = 2 \inf_{R_1, R_2} \sup_{\hat{R}_1, \hat{R}_2} \mathbb{E}_r \bigg [ \mathbf{1}_{\{R_1 \cdot r > R_2 \cdot r\}} \mathbf{1}_{\{\hat{R}_1 \cdot r > \hat{R}_2 \cdot r\}} \bigg ] \\
        & = \inf_{R_1, R_2} \sup_{\hat{R}_1, \hat{R}_2} \bigg [ 1 - \frac{\arccos (\frac{(R_1 - R_2) \cdot (\hat{R}_1 - \hat{R}_2)}{ \|R_1-R_2\|_2 \|\hat{R}_1-\hat{R}_2\|_2}) }{ \pi } \bigg ] \\
        & = 1 - \frac{\arccos (\frac{1}{\sqrt{\sum_{j=1}^d (\sqrt{j} - \sqrt{j-1})^2}}) }{\pi} \to \frac{1}{2}, \qquad d \to \infty
    \end{align*}
    \item {The first equality is by the symmetry of distribution $\mathcal{X}$. The second equality is the evaluation of expectation over $\mathcal{X}$ and the detail is given in Lemma 1. For the third equality, the proof is given in  Lemma 3 and Lemma~\ref{lemma:inequality_2_for_mr} 4.}

\end{theorem}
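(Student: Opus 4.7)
The plan is to establish the three displayed equalities in turn, following the hints that point to Lemmas~1, 3, and~4 cited in the statement.

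\textbf{Symmetry (first equality).} Expand $\sum_{i=1}^2 \prod_{j\neq i}$ into two summands and apply the change of variables $r \mapsto -r$, which preserves the uniform law on $B^d$. The map sends the $i=2$ summand into the $i=1$ summand, so the two terms have the same expectation and the factor~$2$ appears.

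\textbf{Geometry (second equality).} The event inside the expectation is the intersection of two open half-spaces through the origin with outward normals $v := R_1 - R_2$ and $\hat v := \hat R_1 - \hat R_2$. For $r$ uniform on $B^d$ this probability is scale-invariant, and by rotational invariance it equals the angular measure of the corresponding wedge in the $2$-plane spanned by $v$ and $\hat v$, namely $(\pi-\theta)/(2\pi)$ where $\theta$ is the angle between $v$ and $\hat v$. Multiplying by $2$ and writing $\theta = \arccos(\cos\theta)$ yields the claimed closed form.

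\textbf{Min-max (third equality).} Since $\rho \mapsto 1 - \arccos(\rho)/\pi$ is strictly increasing, the outer $\inf\sup$ reduces to $\inf_v \sup_{\hat v}\, \rho(v,\hat v)$ with $\rho(v,\hat v) = v\cdot\hat v/(\|v\|_2 \|\hat v\|_2)$, $v \in [-1,1]^d$, $\hat v \in \{-1,0,1\}^d$. The cosine depends only on the direction of $v$, so I normalize $\|v\|_2 = 1$; by coordinate sign-flips and permutations (which act symmetrically on the admissible set of $\hat v$) I may also assume $v_1 \ge v_2 \ge \cdots \ge v_d \ge 0$. A greedy argument then gives
\begin{equation*}
\sup_{\hat v}\, \rho(v,\hat v) \;=\; \max_{1 \le k \le d} \frac{S_k}{\sqrt{k}}, \qquad S_k := \sum_{i=1}^k v_i,
\end{equation*}
since the best $\hat v$ with exactly $k$ nonzero entries sets $\hat v_i = 1$ on the top-$k$ coordinates of $v$ and $0$ elsewhere. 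The equalizer $v_i^\star = c(\sqrt{i}-\sqrt{i-1})$ with $c := 1/\sqrt{\sum_{j=1}^d(\sqrt{j}-\sqrt{j-1})^2}$ is feasible (positive, monotonically decreasing since $v_i^\star = c/(\sqrt{i}+\sqrt{i-1})$, and unit-norm) and satisfies $S_k^\star/\sqrt{k}=c$ for every $k$. Optimality follows from a saturation lemma: among non-negative decreasing $v$ with $S_k \le M\sqrt{k}$ for all $k$, $\sum v_i^2$ is maximized by saturating the partial-sum constraints from the top, which forces $v_i = M(\sqrt{i}-\sqrt{i-1})$ and $\sum v_i^2 = M^2/c^2$; the unit-norm constraint then forces $M \ge c$. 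For the limit, $(\sqrt{j}-\sqrt{j-1})^2 = 1/(\sqrt{j}+\sqrt{j-1})^2 \sim 1/(4j)$, so the denominator of $c$ diverges like $\tfrac14 \log d$, $c \to 0$, and $1 - \arccos(c)/\pi \to 1/2$.

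\textbf{Main obstacle.} The genuinely non-trivial step is the optimality proof inside the min-max, i.e.\ the saturation lemma. It amounts to an isoperimetric-type inequality bounding $\sum v_i^2$ by the partial sums of $v$. The equalizer is the natural candidate, but one has to argue carefully that any swap $v_i \mapsto v_i + \delta$, $v_{i+1} \mapsto v_{i+1} - \delta$ (preserving the monotonicity $v_i \ge v_{i+1}$) strictly increases $\sum v_j^2$ while keeping every constraint $S_k \le M\sqrt{k}$ intact; induction on the index of the first unsaturated constraint then pushes the mass to the top and reaches the equalizer. Everything else reduces to symmetry, spherical geometry, and a standard asymptotic estimate.
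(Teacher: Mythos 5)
Your proposal is correct and follows essentially the same route as the paper: the same symmetry reduction for the factor $2$, the same two-half-space/wedge computation of the expectation (the paper's Lemma~\ref{lemma:equality_1_for_mr}), the same greedy characterization $\sup_{\hat v}\rho=\max_k S_k/\sqrt{k}$ over $\hat v\in\{-1,0,1\}^d$ (Lemma~\ref{lemma:equality_2_for_mr}), the same extremal profile $v_i\propto\sqrt{i}-\sqrt{i-1}$ for the upper bound (Lemma~\ref{lemma:inequality_1_for_mr}), and an exchange-type optimality argument matching Lemma~\ref{lemma:inequality_2_for_mr}. The only difference is cosmetic: you phrase the optimality step as maximizing $\sum_i v_i^2$ under the partial-sum constraints $S_k\le M\sqrt{k}$ (your saturation lemma, which is indeed true and whose details you correctly flag), whereas the paper runs the equivalent mass-shifting argument as a proof by contradiction on the coordinates of $\theta_1^*-\theta_2^*$.
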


In the next theorem, we further consider the average-case. Assume the prior distribution $\mathcal{P}$ for optimal representation is uniformly distributed: $R_1 ,... R_K \sim \mathcal{U}[0, 1]^d$. 
We can define the average accuracy with the following expression:
\begin{equation*}
    \overline{Acc}_{K, d} := \mathbb{E}_{R_1, R_2, \dots, R_K \sim \mathcal{P}} \sup_{\hat{R}_1, \hat{R}_2, \dots, \hat{R}_K} \mathbb{E}_r \bigg [ \sum_{i=1}^K \prod_{j \neq i} \mathbf{1}_{\{R_i \cdot r > R_j \cdot r\}} \mathbf{1}_{\{\hat{R}_i \cdot r > \hat{R}_j \cdot r\}} \bigg ].
\end{equation*}
\begin{theorem}
\label{prop2}
    Assume $K = 2$, as the dimension of the hypervectors $d$ increases, the average case prediction accuracy decreases:
    \begin{align*}
        \overline{Acc}_{K, d} & = 2\mathbb{E}_{R_1, R_2 \sim U[0, 1]^d} \sup_{\hat{R}_1, \hat{R}_2} \mathbb{E}_r \bigg [ \mathbf{1}_{\{R_1 \cdot r > R_2 \cdot r\}} \mathbf{1}_{\{\hat{R}_1 \cdot r > \hat{R}_2 \cdot r\}} \bigg ] \\
        & = \mathbb{E}_{R_1, R_2 \sim U[0, 1]^d} \sup_{\hat{R}_1, \hat{R}_2} \bigg [ 1 - \frac{\arccos (\frac{(R_1 - R_2) \cdot (\hat{R}_1 - \hat{R}_2)}{ \|R_1-R_2\|_2 \|\hat{R}_1-\hat{R}_2\|_2}) }{ \pi } \bigg ] \\
        & = \mathbb{E}_{R_1, R_2 \sim U[0, 1]^d} \bigg [ 1 - \frac{\arccos \big( \sup_{j=1}^d \frac{\sum_{i=1}^j |R_1 - R_2|_{(i)}}{\sqrt{j} \|R_1 - R_2\|} \big )}{\pi} \bigg ].
    \end{align*}
Here $|R_1 - R_2|_{(i)}$ denotes the $i$-th maximum coordinate for vector $|R_1 - R_2|$.
\end{theorem}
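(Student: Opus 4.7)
The plan is to mirror the proof of Theorem~\ref{prop1} step by step, with the outer $\inf_{R_1, R_2}$ replaced by $\mathbb{E}_{R_1, R_2 \sim U[0,1]^d}$. Because the label-swap symmetry of the uniform distribution $\mathcal{X}$ on $B^d$ and the angle identity recorded in Lemma~1 both operate pointwise in $(R_1, R_2)$ and on the inner expectation $\mathbb{E}_r[\cdot]$, they pass through the outer expectation unchanged. Hence the first two equalities of the theorem --- reducing the sum over $i$ to twice the $i=1$ contribution, and converting the two-indicator probability into $1 - \arccos(\cos\theta)/\pi$ for $\theta$ the angle between $R_1 - R_2$ and $\hat{R}_1 - \hat{R}_2$ --- are formal transpositions of the corresponding steps used for Theorem~\ref{prop1}.

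The substantive new step is the third equality. For each realization of $(R_1, R_2)$, I need to compute
$$\sup_{\hat{R}_1, \hat{R}_2 \in \{0,1\}^d} \frac{(R_1 - R_2) \cdot (\hat{R}_1 - \hat{R}_2)}{\|R_1 - R_2\|_2 \, \|\hat{R}_1 - \hat{R}_2\|_2}.$$
Setting $u = R_1 - R_2$ and $v = \hat{R}_1 - \hat{R}_2 \in \{-1, 0, 1\}^d$, I would first fix the support size $j = \|v\|_0 \in \{1, \dots, d\}$. Conditional on $j$, the inner product $\langle u, v\rangle$ is maximized by the sign-alignment choice $v_i = \operatorname{sgn}(u_i)$ on the $j$ coordinates where $|u_i|$ is largest, giving numerator $\sum_{i=1}^j |u|_{(i)}$ and $\|v\|_2 = \sqrt{j}$. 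Any such $v$ is realizable coordinatewise from a pair in $\{0,1\}^d \times \{0,1\}^d$, so the attained ratio is $\sum_{i=1}^j |u|_{(i)} / (\sqrt{j} \, \|u\|_2)$. Maximizing over $j$ yields exactly the expression inside $\arccos$ in the theorem statement, and integrating over $\mathcal{P}$ produces the claimed closed form.

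The step I expect to be the most delicate is justifying the qualitative claim that $\overline{Acc}_{2, d}$ is decreasing in $d$, since unlike the clean worst-case formula $1 - \arccos(1/\sqrt{\sum (\sqrt{j}-\sqrt{j-1})^2})/\pi$ of Theorem~\ref{prop1}, the argument of $\arccos$ here is a \emph{random} supremum of partial sums of order statistics of $|R_1 - R_2|$. My plan is to construct a coupling that extends $(R_1, R_2) \sim U[0,1]^d$ to $(R_1', R_2') \sim U[0,1]^{d+1}$ by appending a fresh independent coordinate, and compare the suprema in dimensions $d$ and $d+1$. The key observation is that the new coordinate strictly enlarges $\|R_1' - R_2'\|_2$ but enters the numerator only when it lies among the top-$j$ order statistics, so at each fixed $j$ the normalized ratio can only shrink. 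I would then show that the supremum over $j$ is stochastically non-increasing in $d$, whence the claimed monotonicity of the expected accuracy follows from the monotonicity of $\arccos$ on $[0,1]$.
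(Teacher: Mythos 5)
Your handling of the three displayed equalities coincides with the paper's own route: the label-swap symmetry, the angle identity (the paper's Lemma~\ref{lemma:equality_1_for_mr}), and the order-statistics evaluation of $\sup_{\hat{R}_1,\hat{R}_2}$ over binary vectors (the paper's Lemma~\ref{lemma:equality_2_for_mr}) all act pointwise in $(R_1,R_2)$, so replacing the outer $\inf$ of Theorem~\ref{prop1} by $\mathbb{E}_{R_1,R_2\sim U[0,1]^d}$ is indeed a formal transposition. That part of your proposal is correct and is essentially the paper's argument.

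The gap is in your coupling argument for the monotonicity claim. The key assertion --- that after appending a fresh coordinate ``at each fixed $j$ the normalized ratio can only shrink'' --- is false. Take $u = R_1 - R_2 = (1,\epsilon)$ with small $\epsilon>0$ (e.g.\ $R_1=(1,\epsilon)$, $R_2=(0,0)$) and append a third coordinate with $|u_3|$ close to $1$, an event of positive probability under the uniform coupling. At $j=2$ the old ratio is $(1+\epsilon)/(\sqrt{2}\sqrt{1+\epsilon^2})\approx 0.707$, while the new top-two coordinates are both $\approx 1$, giving $2/(\sqrt{2}\sqrt{2+\epsilon^2}) = 1/\sqrt{1+\epsilon^2/2}\approx 1-\epsilon^2/4$, which is strictly larger; the new coordinate inflates the numerator more than the denominator. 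Worse, the overall supremum also increases on this event: in dimension $2$ it equals $1/\sqrt{1+\epsilon^2}\approx 1-\epsilon^2/2$, while in dimension $3$ it is at least $1-\epsilon^2/4$. So pathwise monotonicity under your coupling fails outright, and the stochastic-domination statement you want does not follow from the fixed-$j$ comparison. Any correct argument would have to work at the level of the expectation (exploiting that the appended coordinate is uniform rather than adversarial), which is a genuinely different and harder statement. Note that the paper itself does not prove this monotonicity analytically either: it only establishes the closed-form expression and then estimates $\overline{Acc}_{2,d}$ by Monte Carlo sampling of $R_1,R_2$ (Figure~\ref{average}). So your equalities stand, but the decreasing-in-$d$ part of your plan has a concrete hole at its central step.
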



Since the exact expression for average-case accuracy is challenging to evaluate, we rely on Monte Carlo simulations. In particular, we sample $R_1$ and $R_2$ 1000 times to estimate the expected accuracy. 

We then present the curves of $Acc^w_{K, d}$ and $\overline{Acc}_{K, d}$ over a range of dimensions from 1 to 1000 in Figures ~\ref{worst} and ~\ref{average}, respectively. It is evident from these figures that the upper bound of classification accuracy decreases as the dimension of the representation exceeds the necessary dimension. This observation implies that a higher representation dimension is not necessarily beneficial and could even lead to a decrease in accuracy. It is easy to find that a high dimension for HDCs is not necessary for both the worst-case and average-case, the upper bound of accuracy will drop slowly when the dimension increases. 



\begin{figure*}[t] 
\begin{minipage}[t]{0.49\linewidth} 
\centering
\includegraphics[width = 0.99\linewidth]{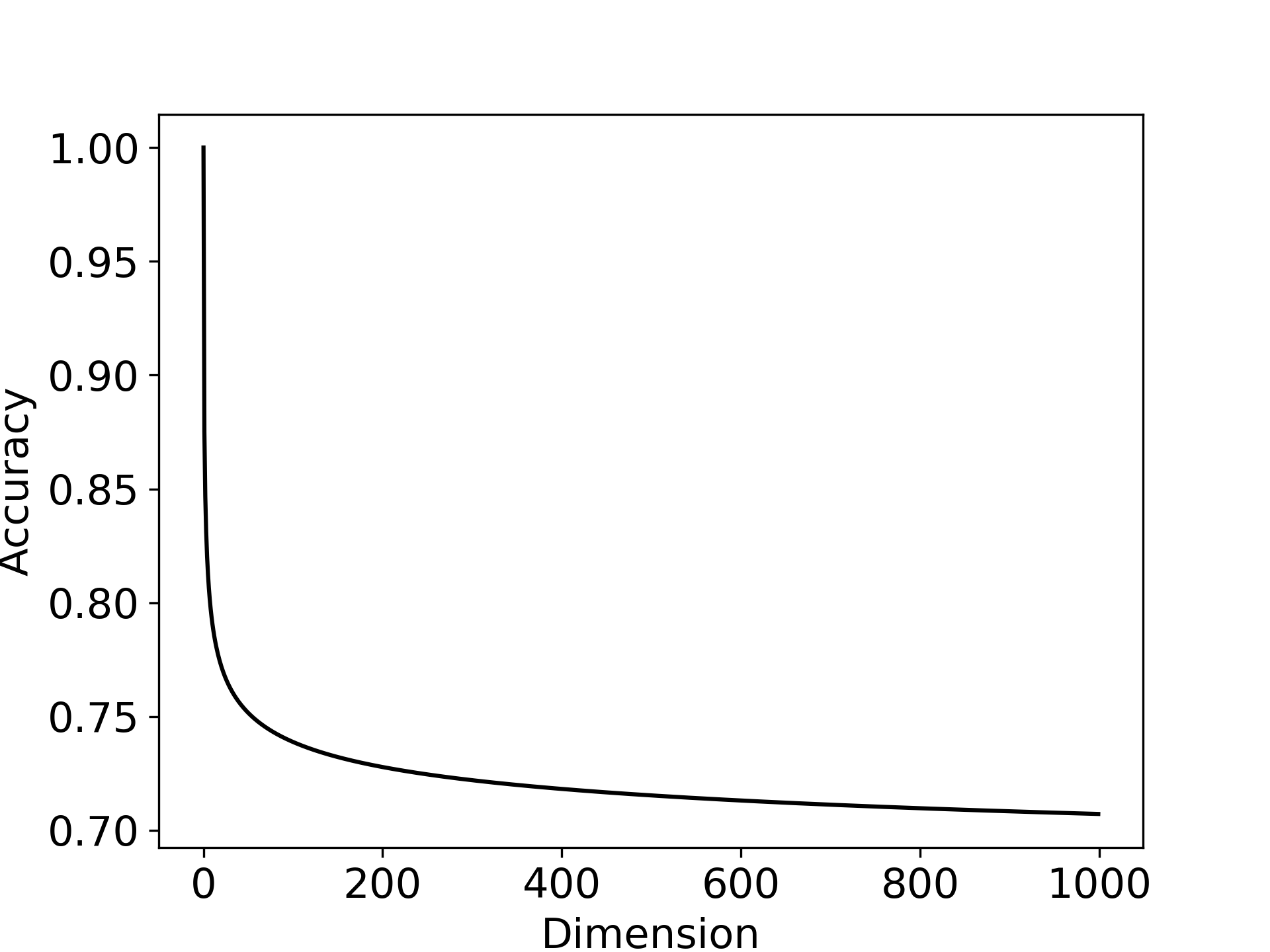}
\caption{ Worst-case Accuracy $Acc_{2, d}^w$} 
\label{worst} 
\end{minipage}
\begin{minipage}[t]{0.49\linewidth}
\centering
\includegraphics[width = 0.99\linewidth]{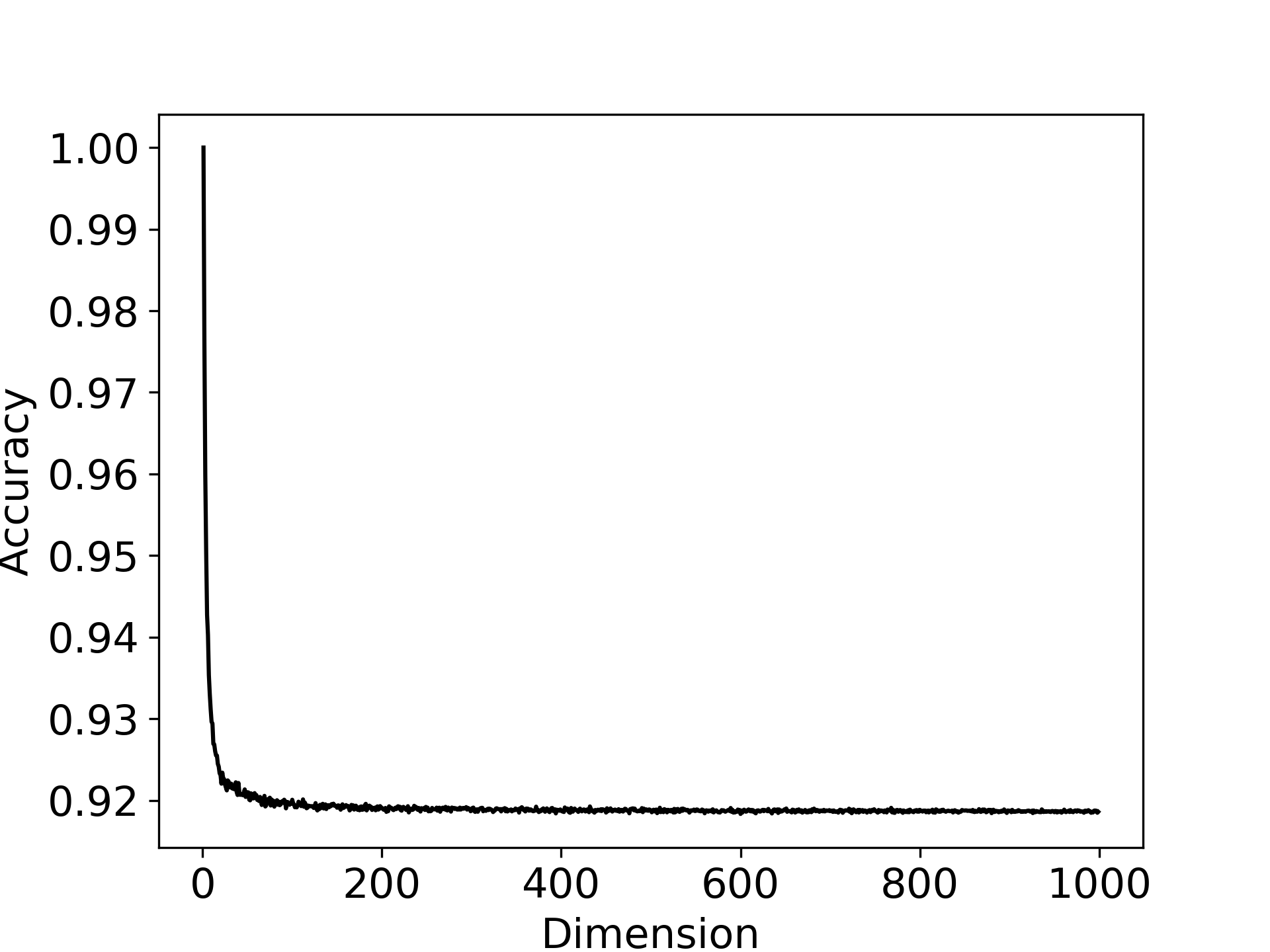}
\caption{Average-case Accuracy $\overline{Acc}_{2, d}$}
\label{average}
\end{minipage}
\end{figure*}

According to ~\cite{tax2002using}, we can approximate multi-class case where $K \geq 3$ by one-against-one binary classification. 
Therefore, we define the quasi-accuracy of $K$-class classification as follows:
\begin{equation*}
    Quasi\textrm{-}Acc_{K, d} = \frac{\sum_{i \neq j} Acc^{ij}_{2, d}}{K(K-1)},
\end{equation*}
where $Acc^{ij}_{2, d}$ can be either the average-case or worst-case accuracy that distinguishes class $i$ and $j$. Since the accuracy $Acc^{ij}_{2, d}$ for binary classification decreases as the dimension increase, the quasi-accuracy follows the same trend.

\subsection{Low-dimension Hypervector Training}
\label{sec:low-d-h-t}
\begin{figure}[htb]
    \centering
    \includegraphics[width = 1.0\linewidth]{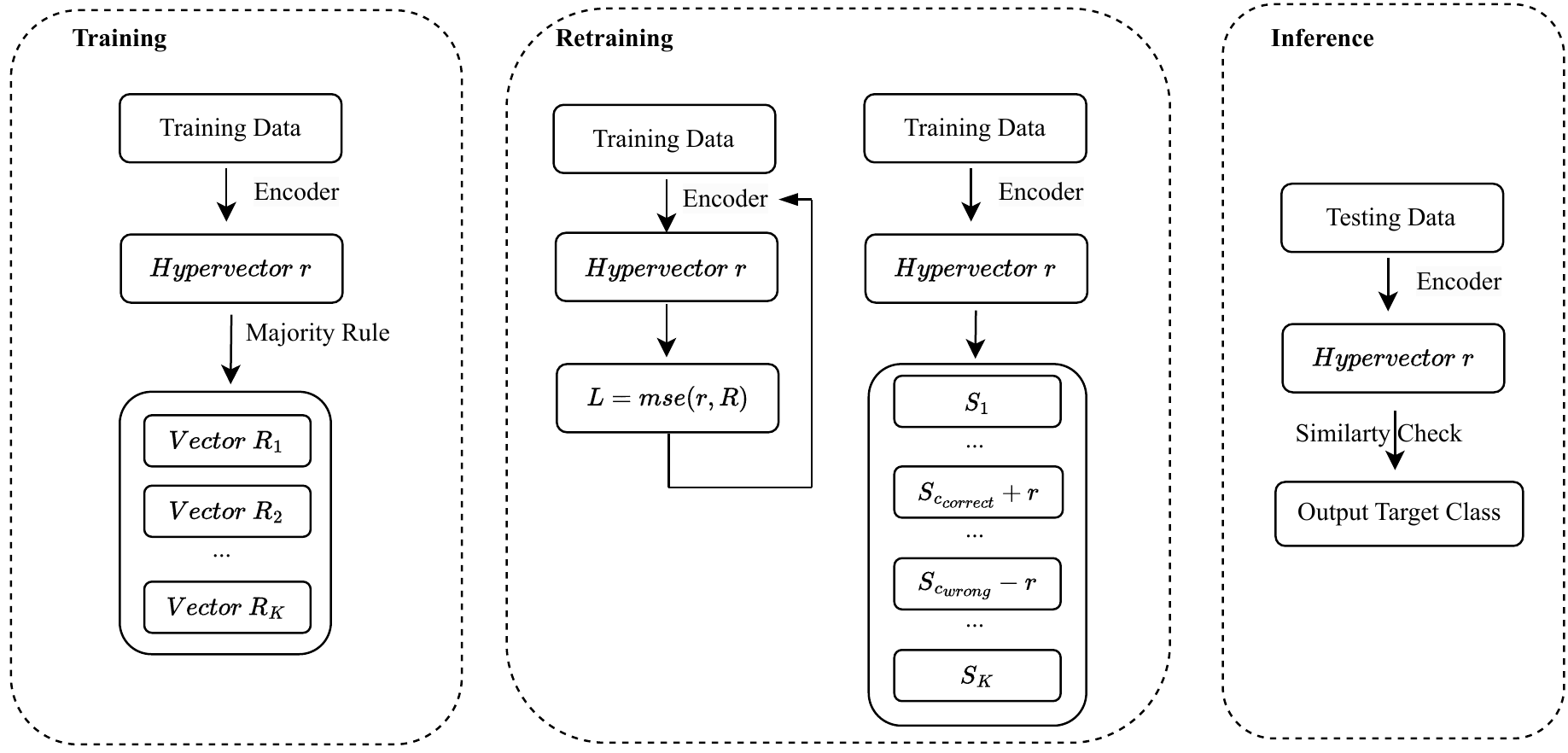}
    \caption{Workflow of Our HDC.}
    \label{fig:workflow}
\end{figure}

To confirm the theoretical findings mentioned above, we propose a HDC design that is shown in Figure~\ref{fig:workflow}. For data encoding, the traditional hyperdimensional computing technique utilizes binding and bundling operations to encode data samples using Equation~\ref{eq:hdc_encoding}. However, in this study, we use a simple binary fully-connected network with integer weights and binary activations as the encoder. Taking the MNIST dataset as an example, we demonstrate the equivalence of these two methods as follows:



\begin{align}
\label{eq:fcn-equivalent-to-hdc-encoder}
    r= \textrm{sgn}(Wx)= \textrm{sgn} \left ( \sum_{0 \leq i \leq 783} W_{i,x_i=1}\right ) 
\end{align}

\noindent
where $W_{i,x_i=1}$ indicates the weights whose  corresponding input $x_i$ =1.

The equation~\ref{eq:fcn-equivalent-to-hdc-encoder} shows that the sum of the weights $W_i$ corresponding to input $x_i=1$, while ignoring weights for $x_i=0$. The resulting sum of weight $\sum_{0 \leq i \leq 783} W_{i,x_i=1}$ in the linear transform corresponds to the sum of binding values of hypervectors $v$ and $p$ in Equation~\ref{r_c}. The integer-weights FCN with binary activation is a natural modification of the hyperdimensional computing encoders, using only integer additions, as in traditional HDC encoders.

Specifically, in our one-layer integer-weight fully-connected network, if we randomly initialize the weights with binary values, it becomes equivalent to the encoder of HDC.

 We used a {\em straight-through estimator} (STE) to learn the weights~\cite{bengio2013estimating} (details of STE are discussed in the Appendix~\footnote{https://github.com/zhangluyan9/EffHDC/blob/main/Appendix.pdf}). The binary representation $R_c$ of each class is generated using the majority rule (Algorithm~\ref{alg_mr}). To achieve this, we first sum up the $N$ hypervectors $r$ belonging to class $c$ and obtain an integer-type representation $S_c$ for that class. Subsequently, we assign a value of 1 if the element in $S_c$ exceeds a predefined threshold $\theta$. Otherwise, we set it to 0. This generates a binary representation $R_c$.



We have also devised a two-step retraining method to refine the binary representation $R_c$ to improve the accuracy. Algorithm~\ref{alg_tt} outlines the procedure we follow. First, we feed the training data to the encoder in batches and employ the mean squared error as the loss function to update the weights in the encoder. Next, we freeze the encoder and update the representation of each class. If the output $r$ is misclassified as class $c_{wrong}$ instead of the correct class $c_{correct}$, we reduce the sum of representation of the wrong class $S_{c_{wrong}}$ by multiplying $r$ with the learning rate. Simultaneously, we increase the sum of representation of the correct class $S_{c_{right}}$ by multiplying $r$ with the learning rate. We then use the modified $S_c$ in Algorithm~\ref{alg_mr} to generate the binary representation $R_c$.

\begin{minipage}{0.46\textwidth}
\begin{algorithm}[H]
    \centering
    \caption{Representation Generation:}
    \label{alg_mr}
    \begin{algorithmic}[1]
\REQUIRE $N$ number of training data $\bm{x}$;
\ENSURE Trained binary encoder $E$; Sum of representation $S$; Binary Representation $R_c$; Outputs of encoder $\bm{y}$; Pre-defined Threshold $\theta$; 

\STATE  $\bm{r} = E(\bm{x})$; $S_c = 0$
\FOR{$i=1$ to $N$}
\STATE $S_c+=r$
\ENDFOR
\FOR{$i=1$ to $d$}
\IF {$S_c[i]>\theta$}
\STATE $R_c[i]=1$
\ELSE
\STATE $R_c[i]=0$
\ENDIF
\ENDFOR

\end{algorithmic}
\end{algorithm}
\end{minipage}
\hfill
\begin{minipage}{0.46\textwidth}
\begin{algorithm}[H]

    \centering
    \caption{HDC Retraining:}
    \label{alg_tt}
\begin{algorithmic}[1]
\REQUIRE Training data $\bm{x}$ with label $\bm{R_c}$; Trained Encoder $E$; $N$ training epochs.

~

\STATE \textbf{Step1:}
\FOR{epoch$ =1$ to $N$}
\STATE $r$ = E($\bm{x}$)
\STATE $L$ = mse($r$, $R_c$) //Bp: STE
\ENDFOR

~

\STATE \textbf{Step2:}
\STATE $r$ = E($\bm{x}$)
\IF {Misclassified} 
\STATE $S_{c_{correct}}+=lr*r$
\STATE $S_{c_{wrong}}-=lr*r$
\ENDIF
\STATE Generate $R_c$ (Algorithm 1, line 5-9)
\end{algorithmic}
\end{algorithm}
\end{minipage}


After computing the representation of each class, we can compare the similarity between the resulting hypervector and the representation of all classes. To do this, we send the test data to the same encoder and obtain its hypervector representation. Next, we convert the value of 0 in $R_c$ to -1 and perform an inner product to check for similarity. The class with the highest similarity is reported as the result.

\section{Results}
\label{sec:exp}

We have implemented our schemes in CUDA-accelerated (CUDA 11.7) PyTorch version 1.13.0. The experiments were performed on an Intel Xeon E5-2680 server with two NVIDIA A100 Tensor Core GPUs and one GeForce RT 3090 GPU, running 64-bit Linux 5.15.  MNIST dataset\footnote{http://yann.lecun.com/exdb/mnist/}, Fashion-MNIST\footnote{https://github.com/zalandoresearch/fashion-mnist}, ISOLET\footnote{https://archive.ics.uci.edu/ml/datasets/isolet} and UCI-HAR\footnote{https://archive.ics.uci.edu/ml/datasets/human+activity+recognition+using+smartphones} are used in our experiments.

\subsection{A case study of our technologies}
Here, we will describe how our approaches improve the MNIST digit recognition task step by step. 

\begin{figure*}[ht] 
\caption{Threshold Study  } 
\label{d_thre}
\begin{minipage}[t]{0.32\linewidth} 
\centering
\includegraphics[width = 1\linewidth]{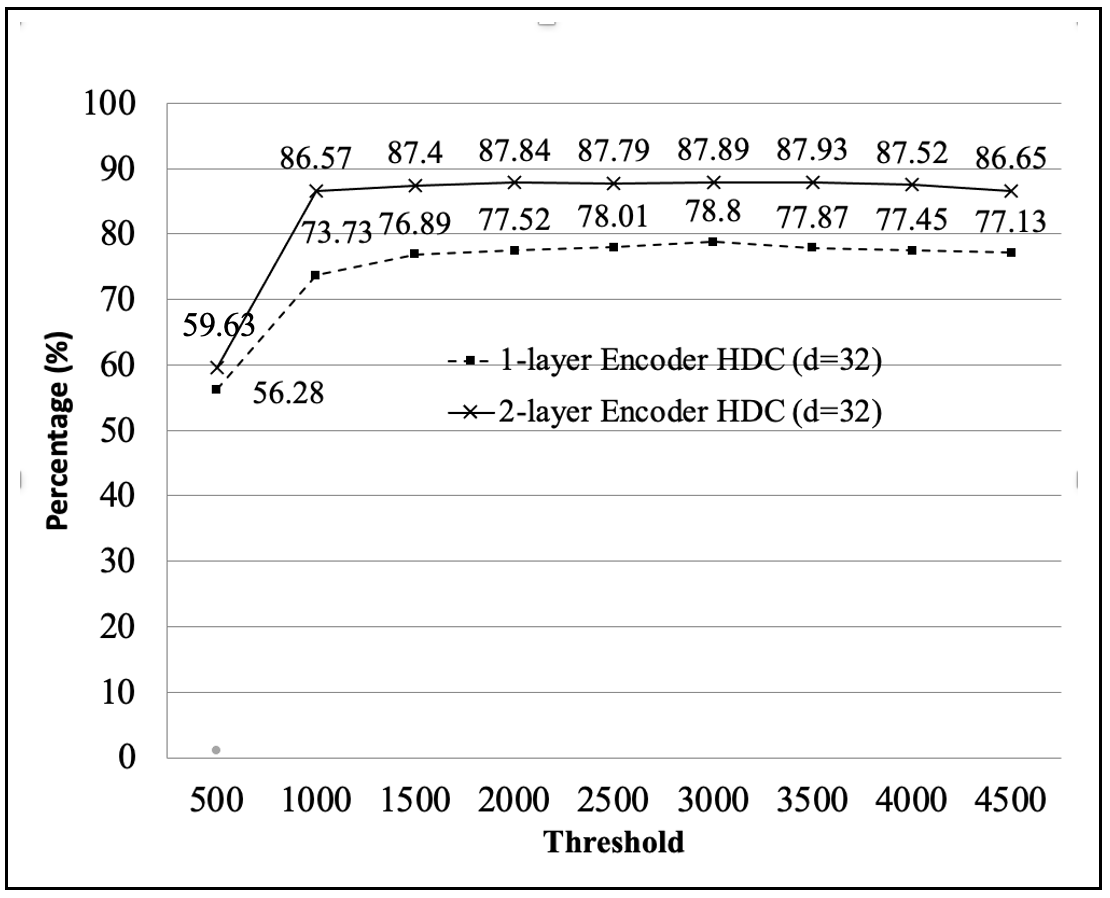}
\end{minipage}
\begin{minipage}[t]{0.32\linewidth}
\centering
\includegraphics[width = 1\linewidth]{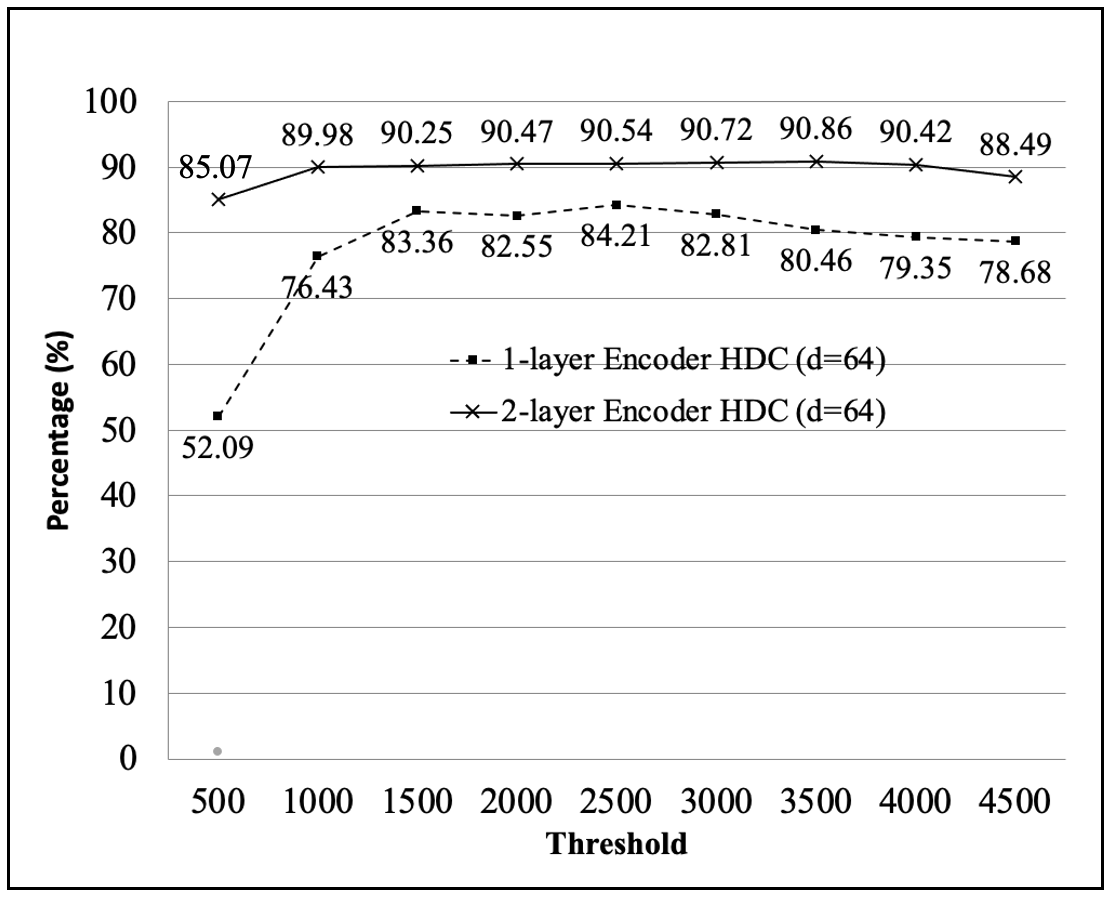}
\end{minipage}
\begin{minipage}[t]{0.32\linewidth}
\centering
\includegraphics[width = 1\linewidth]{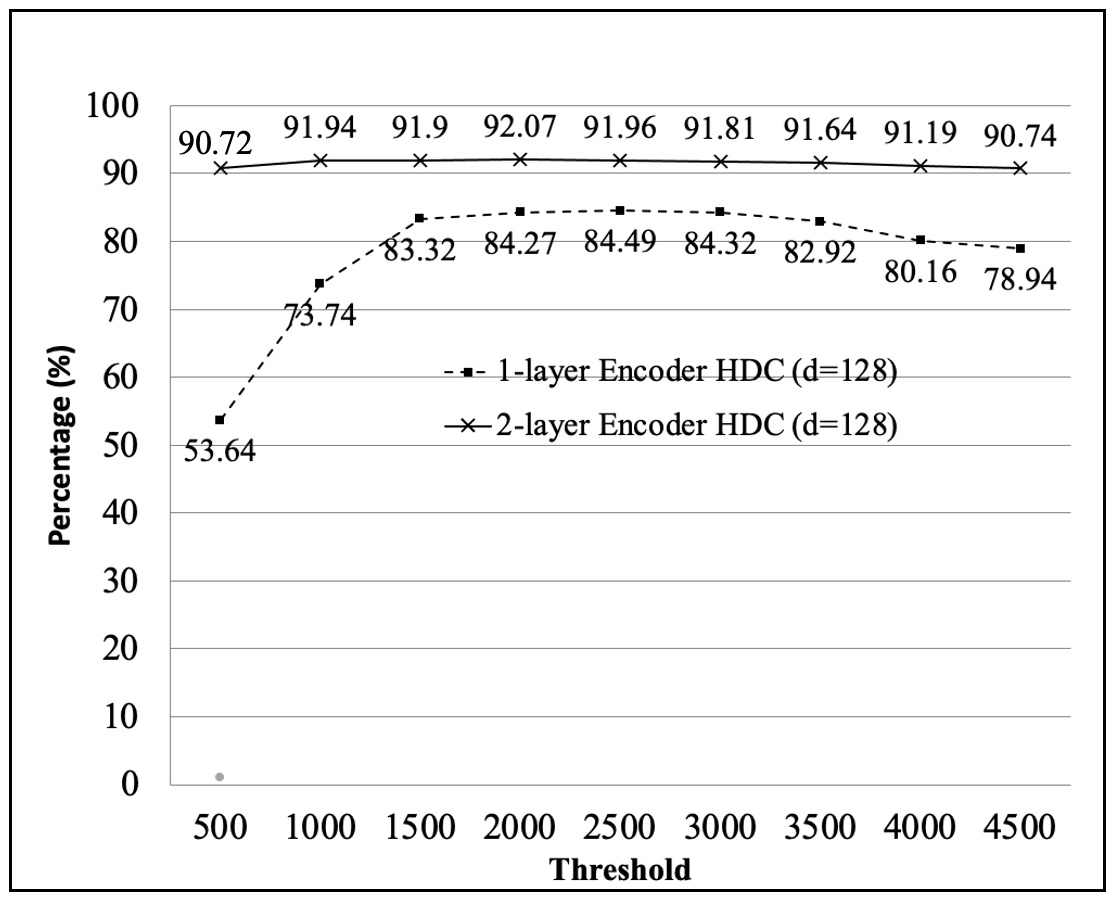}
\end{minipage}
\end{figure*}


\begin{figure*}[ht] 
\begin{minipage}[t]{0.32\linewidth} 
\centering
\includegraphics[width = 1\linewidth]{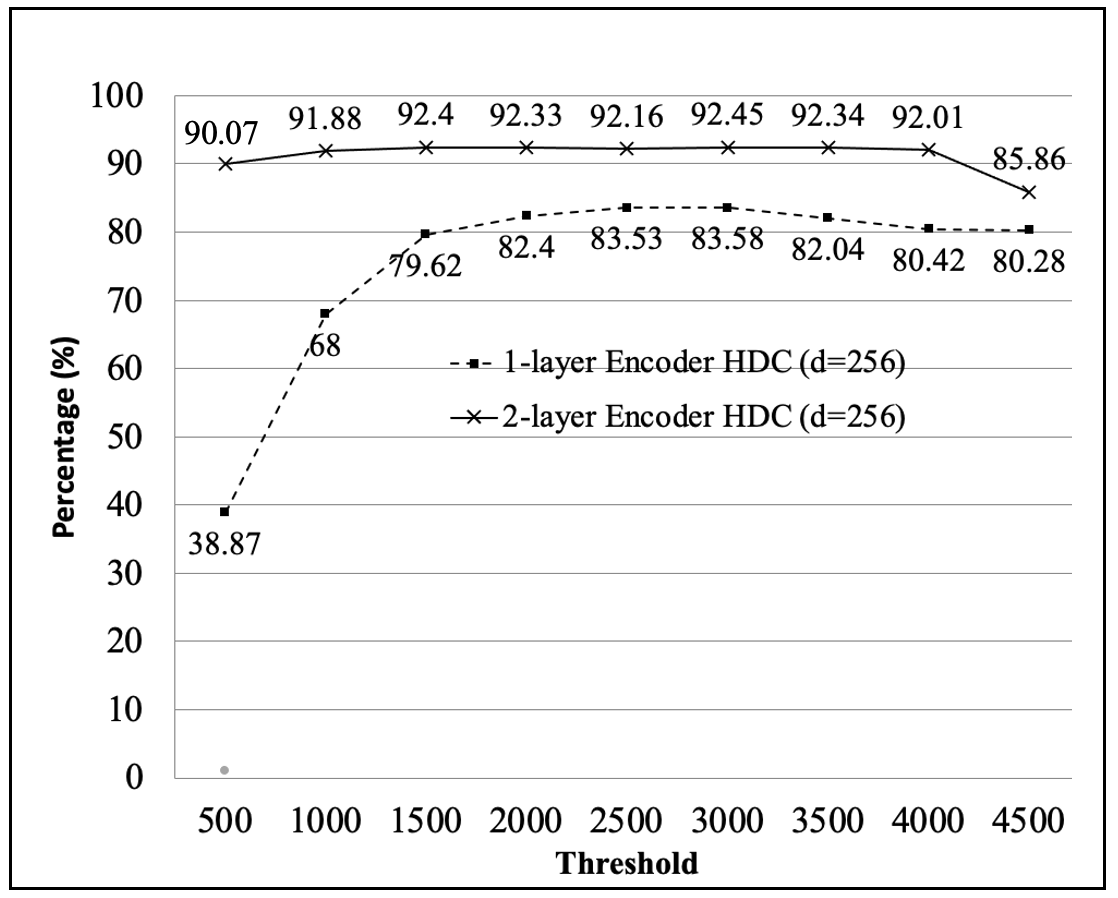}
\end{minipage}
\begin{minipage}[t]{0.32\linewidth}
\centering
\includegraphics[width = 1\linewidth]{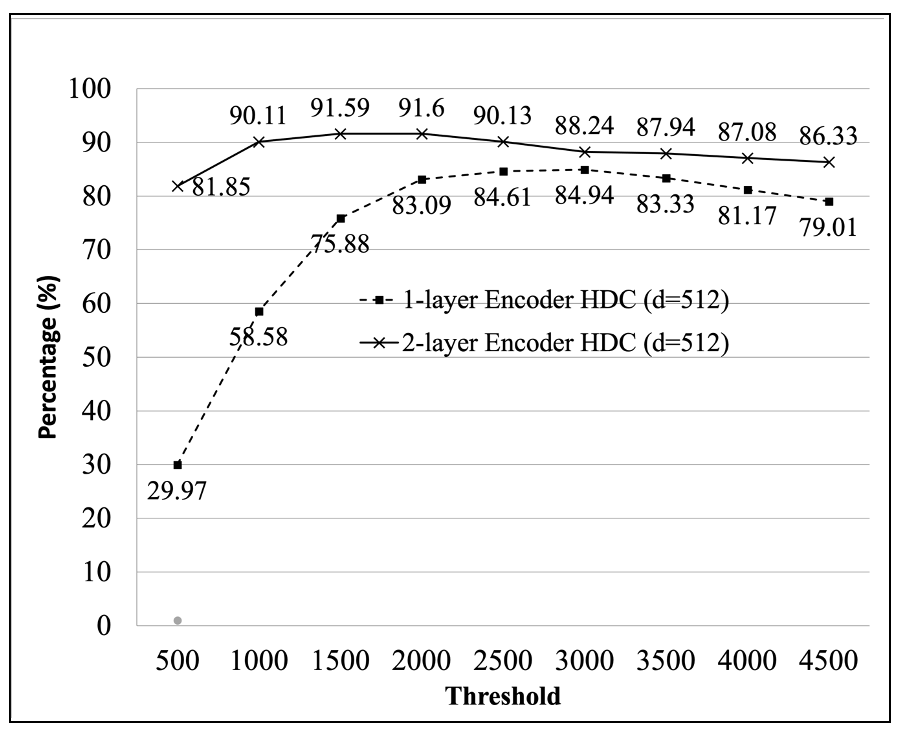}
\end{minipage}
\begin{minipage}[t]{0.32\linewidth}
\centering
\includegraphics[width = 1\linewidth]{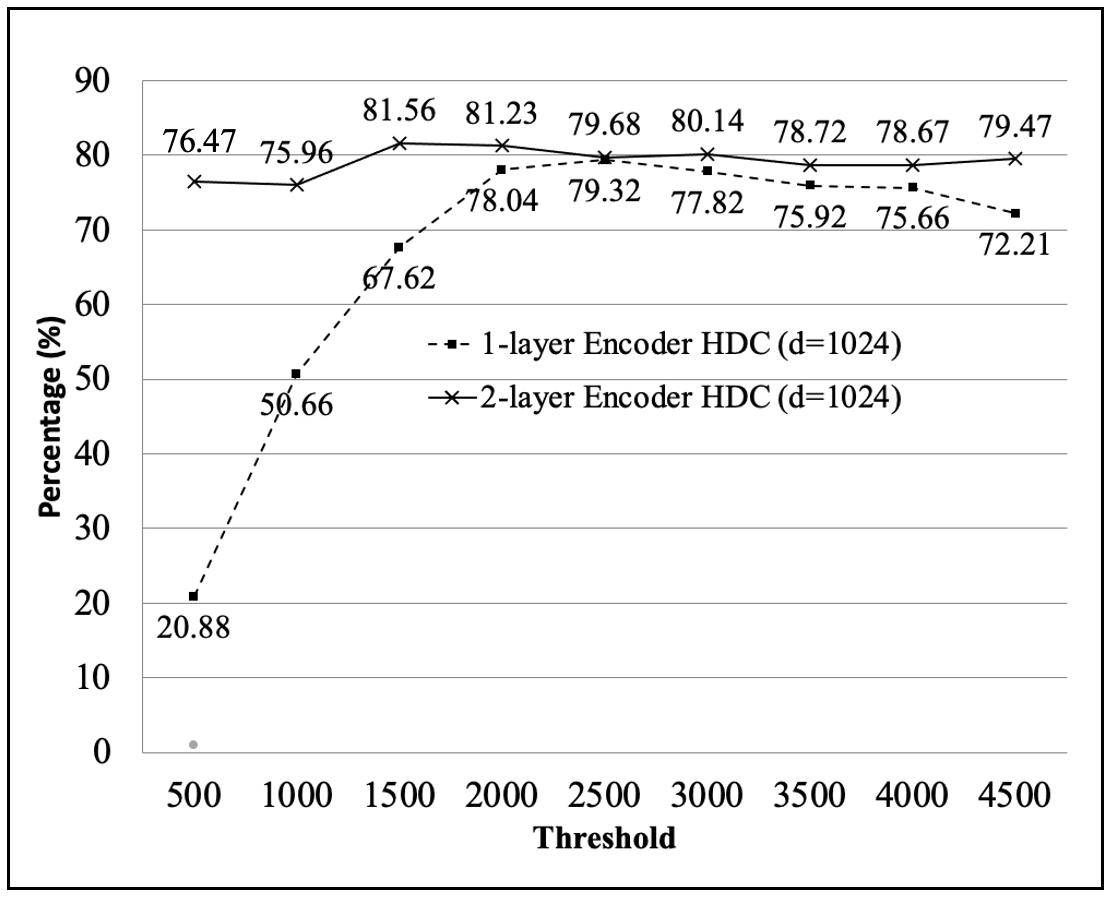}
\end{minipage}

\end{figure*}

\subsubsection{Baseline Accuracy}
Equation~\ref{eq:fcn-equivalent-to-hdc-encoder} shows that a single integer-weights FCN with binary activation can be transformed into traditional HDC encoding. To evaluate the performance of this transformation, we constructed two models consisting of one and two FCN layers (stack of one-layer FCN encoder) respectively and used them to encode an image. 
We then compared the results of both models. 

Using a dimension of 64 as an example, we investigate the correlation between the pre-defined threshold described in Algorithm~\ref{alg_mr} and accuracy. Our experiments on the MNIST dataset (shown in Figure~\ref{d_thre}) reveal that the threshold exhibits high robustness against noise even when the dimension is low. In fact, we observe that the detection accuracy remains virtually unchanged when the threshold was varied from 1000 to 4500. The maximum number in $S_c$ after the encoder is approximately 6500.

We further examined the connection between dimension and inference accuracy using the optimal threshold. According to Figures ~\ref{d_acc} and ~\ref{d_acc_}, we can attain HDC accuracies of 78.8\% and 84.21\%, as well as 87.93\% and 90.86\%, for 1-layer and 2-layer Encoder HDCs, respectively, with dimensions of only 32 and 64. Moreover, as stated in Theorem\ref{prop1}, the accuracy declines beyond a dimension of 128/256. Therefore, the accuracy of HDC is affected by the dimension, and we observed a consistent pattern with our previous findings.

\begin{figure*}[ht] 
\begin{minipage}[ht]{0.49\linewidth} 
\centering
\includegraphics[width = 0.99\linewidth]{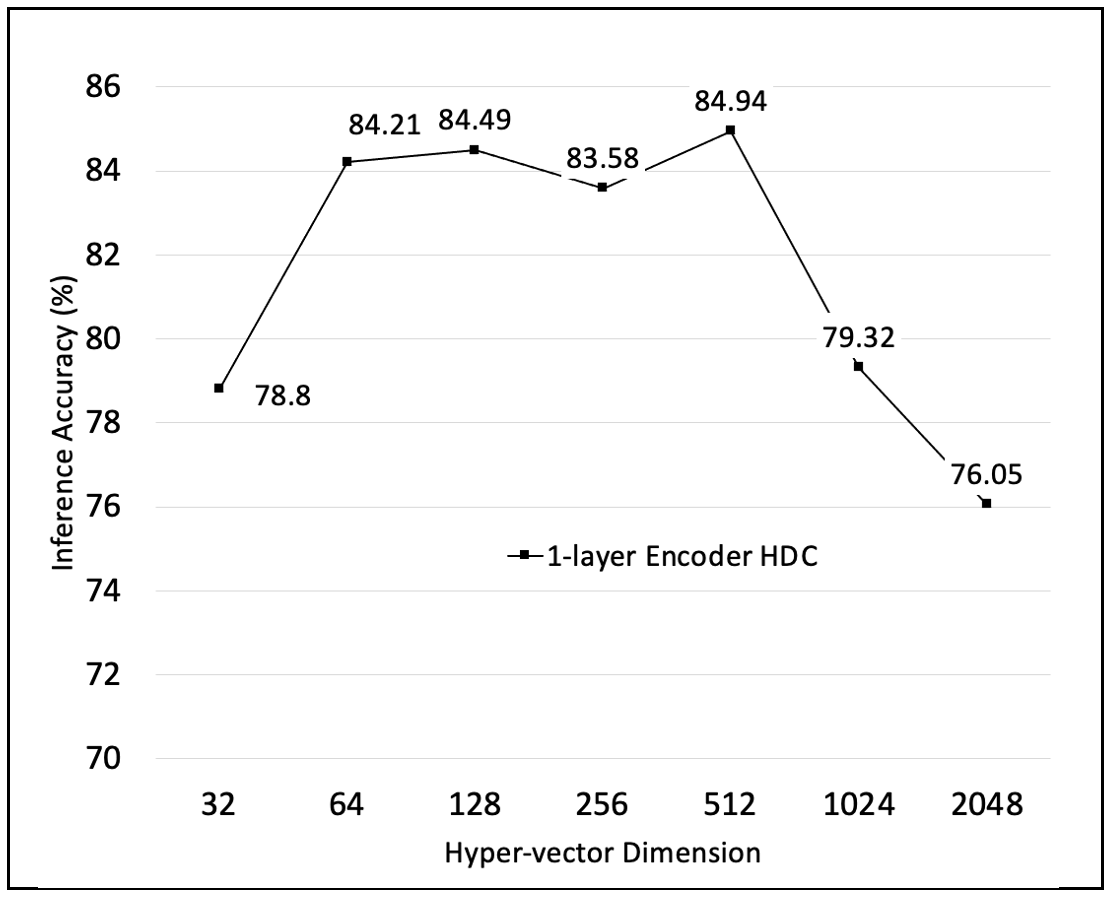}
\caption{1-layer Encoder HDC Accuracy } 
\label{d_acc} 
\end{minipage}
\begin{minipage}[ht]{0.49\linewidth}
\centering
\includegraphics[width = 1.0\linewidth]{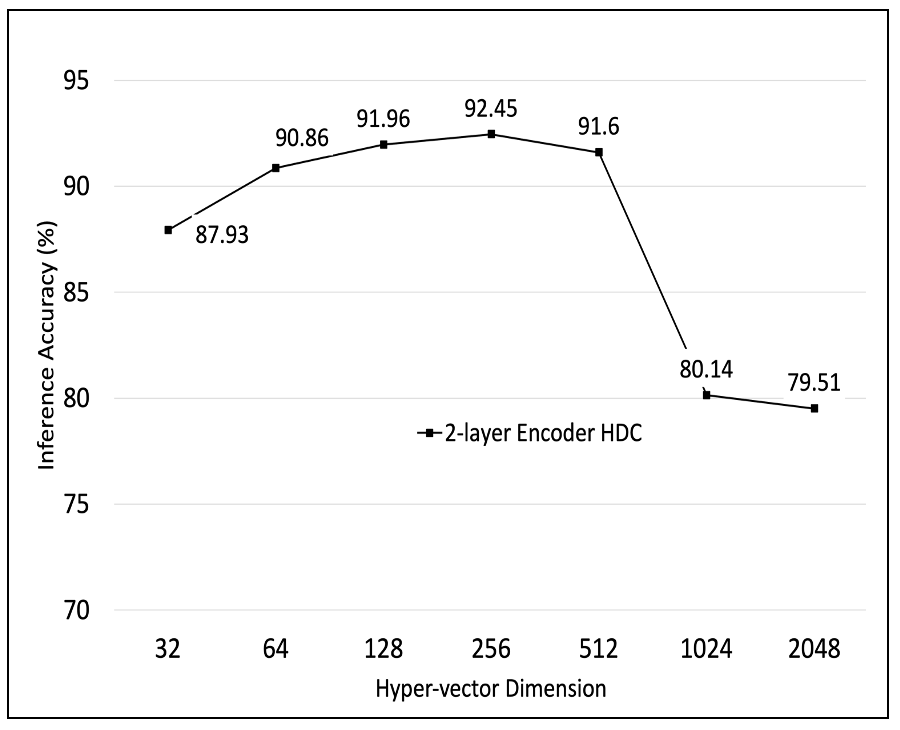}
\caption{2-layer Encoder HDC Accuracy}
\label{d_acc_}
\end{minipage}
\end{figure*}

\subsubsection{HDC Retraining}

Thus far, we have shown how we can achieve HDC accuracy of over 90\% with the smallest hypervector dimension. We can in fact improve the results using retraining techniques we will describe in this section. For example, with a dimension of 64, we can push the accuracy to 91.12\% with our two-step training (0.16\% and 0.1\% accuracy improvement with steps 1 and 2, respectively). The final accuracy can be increased in a matter of minutes.

\subsection{Experimental Results}


We present our final experimental findings with all previous technologies in Table~\ref{tab:comparison}, which showcases a comprehensive comparison between our HDC model and other state-of-the-art models in terms of accuracy, dimension, and number of operations. Our HDC model achieved accuracies of 84.21\% and 84.49\% for the MNIST dataset with encoder dimensions of only $d=64$ and $d=128$, respectively. We further improved the HDC accuracies to 91.12\% and 91.96\% by stacking an additional layer and batch normalization to the encoder architecture. These results demonstrate the effectiveness of our proposed HDC model in achieving competitive accuracies but much low dimension and computations in comparison to other state-of-the-art models.



\begin{table*}[t]
\centering
\caption{Comparison on MNIST dataset.}
\label{tab:comparison}
\begin{center}
\begin{tabular}{l|cccc}
\hline
 & \multirow{2}{*}{Accuracy}  &  \multirow{2}{*}{Dimension} & \multicolumn{2}{c}{Inference}\\ \cline{4-5}

           & & & Encoder additions / Boolean op  & Similarity \\
\hline\hline
\multicolumn{5}{c}{MNIST}                 \\ \hline

SearcHD & 84.43\%  & 10,000 & 7.84M/7.84M & Hamming   \\ \hline
FL-HDC & 88\%  & 10,000  & 7.84M/7.84M & Cosine   \\ \hline
TD-HDC & 88.92\%  & 5,000 & 3.92M/3.92M & Hamming  \\ \hline
QuantHD & 89.28\%  & 10,000 & 7.84M/7.84M &  Hamming  \\ \hline
LeHDC & 94.74\% & 10,000 & 7.84M/7.84M &  Hamming  \\ \hline

 \textbf{Ours*} & 84.21\%  & \textbf{64} & \textbf{0.05M/0} & Hamming   \\ \hline
 \textbf{Ours**} & \textbf{91.12\%}  & \textbf{64} & \textbf{0.054M/0} & Hamming   \\ \hline\hline

\end{tabular}
\end{center}
\begin{tablenotes}
\item {Ours* uses one-layer encoder while Ours** use two-layer encoder.}
\end{tablenotes}
\end{table*}

In order to compare the performance of our HDC model with the state-of-the-art, we selected several relevant works. One such work is TD-HDC, which was proposed by~\cite{chuang2020dynamic}. They developed a threshold-based framework to dynamically choose an execution path and improve the accuracy-energy efficiency trade-off. With their pure binary HD model, they achieved an HDC accuracy of 88.92\% on the MNIST dataset. In another case study,~\cite{hassan2021hyper} utilized a basic HDC model on the MNIST dataset. They encoded the pixels based on their black/white value and used majority sum operation in the training stage to combine similar samples. Their approach resulted in an HDC accuracy of 86\% on the MNIST dataset.


HDC has also been applied in the fields of federated learning and secure learning. For example, FL-HDC by~\cite{hsieh2021fl} focused on the combination of HDC and federated learning. They introduced the polarized model into the federated learning field to reduce communication costs and were able to control the accuracy drop by retraining. Their approach achieved an HDC accuracy of 88\% on the MNIST dataset. In another work, SecureHD~\cite{imani2019framework} adapted a novel encoding and decoding method based on HDC to perform secure learning tasks. 


Recent works in the field of HDC include LeHDC~\cite{duan2022lehdc}, which transferred the HDC classifier into a binary neural network, achieving accuracies of 94.74\% on the MNIST dataset. Additionally, QuantHD~\cite{imani2019quanthd} and SearcHD~\cite{imani2019searchd} are two methods that introduce multi-model and retraining techniques into the HDC field. These methods have shown promising results in improving the accuracy and performance of HDC models.

\begin{table*}[bt]
\centering
\caption{Comparison on other datasets.}
\label{tab:comparison1}
\begin{center}
\begin{tabular}{l|cccc}
\hline
 & \multirow{2}{*}{Accuracy}  &  \multirow{2}{*}{Dimension} & \multicolumn{2}{c}{Inference}\\ \cline{4-5}

           & & & Encoder additions/Boolean op & Similarity \\
\hline\hline
\multicolumn{5}{c}{Fashion-MNIST}                 \\ \hline

 BinHD & NA  & NA & NA & NA   \\ \hline
 \textbf{Ours} & \textbf{81.64\%}  & \textbf{64} & \textbf{0.059M/0} & Hamming   \\ \hline
 \textbf{Ours} & \textbf{81.58\%}  & \textbf{128} & \textbf{0.134M/0} & Hamming   \\ \hline\hline
 \multicolumn{5}{c}{ISOLET}                 \\ \hline

BinHD & 85.6\%  & 10,000 & 6.17M/6.17M & Hamming   \\ \hline
 \textbf{Ours} & \textbf{91.4\%}  & \textbf{64} & \textbf{0.249M/0} & Hamming   \\ \hline
 \textbf{Ours} & \textbf{93.2\%}  & \textbf{128} & \textbf{0.524M/0} & Hamming   \\ \hline\hline
 \multicolumn{5}{c}{UCI-HAR}                 \\ \hline

BinHD & 87.3\%  & 10,000 & 5.61M/5.61M & Hamming   \\ \hline
 \textbf{Ours} & \textbf{94.20\%}  & \textbf{64} & \textbf{0.336M/0} & Hamming   \\ \hline
 \textbf{Ours} & \textbf{94.81\%}  & \textbf{128} & \textbf{0.304M/0} & Hamming   \\ \hline\hline

\end{tabular}
\end{center}
\end{table*}

We also conducted tests on more datasets such as Fashion-MNIST, ISOLET, and UCI-HAR to assess the effectiveness of our methods. Specifically, we achieved an accuracy rate of 81.64\%, 91.4\%, and 94.20\% for Fashion-MNIST, ISOLET, and UCI-HAR, respectively, using only 64 dimensions for the hypervector. Our results are shown in Table~\ref{tab:comparison1}, with BinHD~\cite{imani2019binary} used as the baseline for comparison.

To minimize computational costs in our method, we decided to use Hamming distance for inference, as it is more efficient than cosine similarity, which involves additional multiplication and division operations. With Hamming distance, the number of operations is directly proportional to the dimension of the hypervectors. This means that our approach requires only 0.64\% of the operations needed by other HDC models with a dimension of 10,000, when using a dimension of $d$=64. This reduction in operations can speed up the inference process, making our approach more efficient for real-world applications.


\section{Discussion}
\subsection{Limitation of HDCs}

In this section, we aim to shed light on the relationship between the dimension of hypervectors and the number of classes, which has been largely overlooked in other HDC studies. To illustrate this point, we use the MNIST dataset as an example. State-of-the-art works typically employ hypervectors with dimensions ranging from 5,000 to 10,000 to differentiate pixel values that span from 0 to 255 and do a 10-classes classification. However, with an increase in the number of classes to 100 or 1,000, more information is required for precise classification. For input data, if we apply quantization and employ a suitable encoder to distill information from the original image, it is theoretically feasible to operate with a considerably smaller dimension. However, the quantity of classes remains invariant. This provides an explanation for the challenges encountered by our method, and other HDC techniques, in achieving high performance on more intricate datasets such as Cifar100 and ImageNet where the number of classes is significantly larger.

\subsection{Further discussion of the low accuracy when $d$ is low}
As can be seen from Figure \ref{worst}, \ref{average} and Figure \ref{d_acc}/\ref{d_acc_}, the current Theorem \ref{prop1} and \ref{prop2} do not predict the low accuracy for dimension $d \leq 128/256$. 

The issue can be attributed to the breakdown of the assumption that data can be embedded in a $d$-dimensional linearly separable unit ball. Consider a different setup in that the underlying dimension for data is fixed to be $m$. Each class is defined to be:
\begin{equation*}
    C_i = \{r \in \mathbb{B}^m | R_i \cdot r > R_j \cdot r, j \neq i\}, \quad 1 \leq i \leq K.
\end{equation*}

Assume that the linear projection of data from $m$-dimensional linearly separable unit ball to $d$-dimensional ($d < m$) space in a coordinate-wise approach. It is equivalent to optimizing over the following hypervector set
\begin{equation*}
    R_{co_1, \dots, co_d} = \{ R | R_{i} \in \{0, 1\}, i \in \{co_1, \dots, co_d\}; R_{i} = 0, i \not \in \{co_1, \dots, co_d\} \}, 
\end{equation*}
Here $co_1, \dots, co_d$ are the coordinates index of the projected space.

The worst-case $K$-classes prediction accuracy of the $m$-dimensional data projected onto a $d$-dimensional subspace is 
\begin{align*}
    Acc^w_{K, m, d} & := \inf_{R_1, R_2, \dots, R_K \in [0, 1]^m} \sup_{co_1, \dots, co_d} \sup_{\hat{R}_1, \hat{R}_2, \dots, \hat{R}_K \in R_{co_1, \dots, co_d}} \\ & \mathbb{E}_r \bigg [ \sum_{i=1}^K \prod_{j \neq i} \mathbf{1}_{\{R_i \cdot r > R_j \cdot r\}} \mathbf{1}_{\{\hat{R}_i \cdot r > \hat{R}_j \cdot r\}} \bigg ] \\
    & \leq Acc^w_{K, m, d+1}\\
    & \leq Acc^w_{K, m}.
\end{align*}
The monotonicity comes from the fact that the two supremums are taken over a monotonic hypervector set $R$ sequence.

The following theorem summarizes the above reasoning:
\begin{proposition}
\label{prop_last}
    Assume the representation dimension $d \leq m - 1$, the classification accuracy increases monotonically as $d$ increase:
    \begin{equation}
        Acc^w_{K, m, d} \leq Acc^w_{K, m, d+1}, \quad d \leq m-1.
    \end{equation}
\end{proposition}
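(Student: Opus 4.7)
The plan is to derive the monotonicity purely from set inclusion, since both the inner $\sup$ over $\hat{R}_i$'s and the outer $\sup$ over coordinate indices $co_1,\dots,co_d$ can only grow when the feasible set grows, and this growth is exactly what happens when we go from $d$ to $d+1$ coordinates.

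First I would fix any $R_1,\dots,R_K \in [0,1]^m$ and any size-$d$ coordinate subset $S = \{co_1,\dots,co_d\}$. Because the hypothesis $d \leq m-1$ guarantees $\{1,\dots,m\}\setminus S \neq \emptyset$, we may pick some $co_{d+1}$ outside $S$ and form $S' = S \cup \{co_{d+1}\}$. Inspecting the definition of $R_{co_1,\dots,co_d}$, any $R \in R_S$ has nonzero entries only on $S$; viewing it inside $R_{S'}$ just means the extra coordinate $co_{d+1}$ happens to equal $0$, which is permitted. Hence $R_S \subseteq R_{S'}$.

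Next I would push this inclusion through the two suprema. For fixed $R_1,\dots,R_K$, the integrand $\sum_i \prod_{j\neq i} \mathbf{1}_{\{R_i\cdot r>R_j\cdot r\}} \mathbf{1}_{\{\hat{R}_i\cdot r>\hat{R}_j\cdot r\}}$ depends on $\hat{R}_1,\dots,\hat{R}_K$ only, so enlarging the feasible set can only enlarge the $\sup$. Thus
\[
\sup_{\hat{R}_i \in R_S} \mathbb{E}_r[\cdots] \;\leq\; \sup_{\hat{R}_i \in R_{S'}} \mathbb{E}_r[\cdots].
\]
Taking $\sup$ over the coordinate choice $S'$ of size $d+1$ on the right and noting that every $S$ of size $d$ extends to some such $S'$, we obtain
\[
\sup_{|S|=d}\sup_{\hat{R}_i \in R_S} \mathbb{E}_r[\cdots] \;\leq\; \sup_{|S'|=d+1}\sup_{\hat{R}_i \in R_{S'}} \mathbb{E}_r[\cdots].
\]
Finally I would apply $\inf_{R_1,\dots,R_K \in [0,1]^m}$ to both sides; pointwise inequalities are preserved under infimum, which yields $Acc^w_{K,m,d} \leq Acc^w_{K,m,d+1}$.

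The argument is essentially bookkeeping, so there is no analytic obstacle — no expectation needs to be evaluated, and the earlier closed-form expressions from Theorems \ref{prop1} and \ref{prop2} are not invoked. The only point requiring care is the choice of $co_{d+1}$, which uses $d \leq m-1$; without this hypothesis the extension step fails, and the inequality becomes vacuous at $d=m$. Beyond that, the proof is a textbook application of monotonicity of $\sup$ and $\inf$ under set containment.
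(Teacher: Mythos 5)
Your argument is correct and is exactly the paper's own reasoning, which states in one line that the monotonicity follows because the two suprema are taken over a nested (monotone) sequence of hypervector sets; you simply spell out the inclusion $R_S \subseteq R_{S'}$, the resulting inequality of suprema, and the preservation under the outer infimum. No gap, and no genuinely different route.
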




Both Theorem~\ref{prop1} and \ref{prop2} characterize the accuracy when $d \geq m$. Proposition~\ref{prop_last} describes the dimension-accuracy relationship for $d \leq m$. The above reasoning has been confirmed by our numerical experiments.

\section{Conclusion}
In this paper, we considered the dimension of the hypervectors used in hyperdimensional computing. We presented a detailed analysis of the relationship between dimension and accuracy to demonstrate that it is not necessary to use high dimensions to get a good performance in HDC.  
Contrary to popular belief, we proved that as the dimension of the hypervectors $d$ increases, the
upper bound for inference worst-case accuracy and average-case accuracy decreases. As a result, we reduce the dimensions from the tens of thousands used by the state-of-the-art to merely tens, while achieving the same level of accuracy. Computing operations during inference have been reduced to a tenth of that in traditional HDCs. Running on the MNIST dataset, we achieved an HDC accuracy of $91.12\%$ using a dimension of only 64. All our results are reproducible using the code we have made public.

\clearpage

\section{Ethical Statements}

We hereby assure that the following requirements have been met in the manuscript:

\begin{itemize}
\item All of the datasets used in this paper are open-source and have been made publicly available for use. These datasets have been carefully vetted to ensure that they do not contain any personal or sensitive information that could compromise the privacy of individuals. Therefore, there are no concerns about violating personal privacy when using these datasets for research or analysis.

\item  The paper is free of any potential use of the work for policing or military purposes.

\item This manuscript presents the authors' original work that has not been previously published elsewhere.

\item The paper reflects the author's research and analysis accurately and completely.

\item Co-authors and co-researchers who made significant contributions to the work are duly acknowledged.

\item The results are appropriately contextualized within prior and existing research.

\item Proper citation is provided for all sources used.

\item All authors have actively participated in the research leading to this manuscript and take public responsibility for its content.
\end{itemize}

\bibliographystyle{splncs04}
\bibliography{ref}

\begin{thebibliography}{10}
\providecommand{\url}[1]{\texttt{#1}}
\providecommand{\urlprefix}{URL }
\providecommand{\doi}[1]{https://doi.org/#1}

\bibitem{asgarinejad2020detection}
Asgarinejad, F., Thomas, A., Rosing, T.: Detection of epileptic seizures from
  surface {EEG} using hyperdimensional computing. In: 2020 42nd Annual
  International Conference of the IEEE Engineering in Medicine \& Biology
  Society (EMBC). pp. 536--540. IEEE (2020)

\bibitem{bengio2013estimating}
Bengio, Y., Léonard, N., Courville, A.: Estimating or propagating gradients
  through stochastic neurons for conditional computation (2013)

\bibitem{chuang2020dynamic}
Chuang, Y.C., Chang, C.Y., Wu, A.Y.A.: Dynamic hyperdimensional computing for
  improving accuracy-energy efficiency trade-offs. In: 2020 IEEE Workshop on
  Signal Processing Systems (SiPS). pp.~1--5. IEEE (2020)

\bibitem{duan2022lehdc}
Duan, S., Liu, Y., Ren, S., Xu, X.: {LeHDC}: Learning-based hyperdimensional
  computing classifier. arXiv preprint arXiv:2203.09680  (2022)

\bibitem{frady2021computing}
Frady, E.P., Kleyko, D., Kymn, C.J., Olshausen, B.A., Sommer, F.T.: Computing
  on functions using randomized vector representations. arXiv preprint
  arXiv:2109.03429  (2021)

\bibitem{hassan2021hyper}
Hassan, E., Halawani, Y., Mohammad, B., Saleh, H.: Hyper-dimensional computing
  challenges and opportunities for {AI} applications. IEEE Access  (2021)

\bibitem{hsieh2021fl}
Hsieh, C.Y., Chuang, Y.C., Wu, A.Y.A.: {FL-HDC}: Hyperdimensional computing
  design for the application of federated learning. In: 2021 IEEE 3rd
  International Conference on Artificial Intelligence Circuits and Systems
  (AICAS). pp.~1--5. IEEE (2021)

\bibitem{imani2019quanthd}
Imani, M., Bosch, S., Datta, S., Ramakrishna, S., Salamat, S., Rabaey, J.M.,
  Rosing, T.: {QuantHD}: A quantization framework for hyperdimensional
  computing. IEEE Transactions on Computer-Aided Design of Integrated Circuits
  and Systems  \textbf{39}(10),  2268--2278 (2019)

\bibitem{imani2019framework}
Imani, M., Kim, Y., Riazi, S., Messerly, J., Liu, P., Koushanfar, F., Rosing,
  T.: A framework for collaborative learning in secure high-dimensional space.
  In: 2019 IEEE 12th International Conference on Cloud Computing (CLOUD). pp.
  435--446. IEEE (2019)

\bibitem{imani2019binary}
Imani, M., Messerly, J., Wu, F., Pi, W., Rosing, T.: A binary learning
  framework for hyperdimensional computing. In: 2019 Design, Automation \& Test
  in Europe Conference \& Exhibition (DATE). pp. 126--131. IEEE (2019)

\bibitem{imani2019searchd}
Imani, M., Yin, X., Messerly, J., Gupta, S., Niemier, M., Hu, X.S., Rosing, T.:
  {SearcHD}: A memory-centric hyperdimensional computing with stochastic
  training. IEEE Transactions on Computer-Aided Design of Integrated Circuits
  and Systems  \textbf{39}(10),  2422--2433 (2019)

\bibitem{neubert2019introduction}
Neubert, P., Schubert, S., Protzel, P.: An introduction to hyperdimensional
  computing for robotics. KI-K{\"u}nstliche Intelligenz  \textbf{33}(4),
  319--330 (2019)

\bibitem{rahimi2016robust}
Rahimi, A., Kanerva, P., Rabaey, J.M.: A robust and energy-efficient classifier
  using brain-inspired hyperdimensional computing. In: Proceedings of the 2016
  international symposium on low power electronics and design. pp. 64--69
  (2016)

\bibitem{salamat2019f5}
Salamat, S., Imani, M., Khaleghi, B., Rosing, T.: {F5-HD}: Fast flexible
  {FPGA}-based framework for refreshing hyperdimensional computing. In:
  Proceedings of the 2019 ACM/SIGDA International Symposium on
  Field-Programmable Gate Arrays. pp. 53--62 (2019)

\bibitem{schlegel2022comparison}
Schlegel, K., Neubert, P., Protzel, P.: A comparison of vector symbolic
  architectures. Artificial Intelligence Review  \textbf{55}(6),  4523--4555
  (2022)

\bibitem{schmuck2019hardware}
Schmuck, M., Benini, L., Rahimi, A.: Hardware optimizations of dense binary
  hyperdimensional computing: Rematerialization of hypervectors, binarized
  bundling, and combinational associative memory. ACM Journal on Emerging
  Technologies in Computing Systems (JETC)  \textbf{15}(4),  1--25 (2019)

\bibitem{tax2002using}
Tax, D.M., Duin, R.P.: Using two-class classifiers for multiclass
  classification. In: 2002 International Conference on Pattern Recognition.
  vol.~2, pp. 124--127. IEEE (2002)

\bibitem{thomas2020theoretical}
Thomas, A., Dasgupta, S., Rosing, T.: Theoretical foundations of
  hyperdimensional computing. arXiv preprint arXiv:2010.07426  (2020)

\bibitem{yu2022understanding}
Yu, T., Zhang, Y., Zhang, Z., De~Sa, C.: Understanding hyperdimensional
  computing for parallel single-pass learning. arXiv preprint arXiv:2202.04805
  (2022)

\end{thebibliography}

\clearpage

\appendix
\section{Appendix}
\subsection{Straight-through Estimator}
The Straight-Through Estimator (STE)~\cite{bengio2013estimating} is a commonly used technique in deep learning paradigms that incorporate discrete or non-differentiable functions, for example, binarization. This method facilitates the approximation of gradients through these operations, thereby enabling end-to-end model training involving non-differentiable components. Consequently, we utilize STE for backpropagation and approximate its gradient $G'(x)$ correspondingly.
\begin{equation}
\label{eq:Q_gradient}
G'(x) \approx 1.
\end{equation}

\subsection{Proofs for Lemmas}
\begin{lemma}\label{lemma:equality_1_for_mr}
\begin{equation*}
    \mathbb{E}_x \bigg [ \mathbf{1}_{\{\theta_1 \cdot x > \theta_2 \cdot x\}} \mathbf{1}_{\{\hat{\theta}_1 \cdot x > \hat{\theta}_2 \cdot x\}} \bigg ] = \frac{1}{2} (1 - \frac{\arccos (\frac{(\theta_1 - \theta_2) \cdot (\hat{\theta}_1 - \hat{\theta}_2)}{ \|\theta_1-\theta_2\|_2 \|\hat{\theta}_1-\hat{\theta}_2\|_2})}{\pi}).
\end{equation*}
\end{lemma}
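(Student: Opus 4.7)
The plan is to rewrite each indicator as indicating membership in a half-space through the origin and then evaluate the joint probability by rotational symmetry. Set $u := \theta_1 - \theta_2$ and $v := \hat{\theta}_1 - \hat{\theta}_2$. Then $\mathbf{1}_{\{\theta_1 \cdot x > \theta_2 \cdot x\}} = \mathbf{1}_{\{u \cdot x > 0\}}$ and similarly for the hatted pair, so the expectation is nothing but
\begin{equation*}
\mathbb{P}\bigl(u \cdot x > 0 \text{ and } v \cdot x > 0\bigr), \qquad x \sim \mathrm{Unif}(B^d).
\end{equation*}
I would emphasize up front that the event depends on $u, v$ only through their directions, since scaling by a positive constant does not change the half-space.

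Next I would reduce to a two-dimensional computation. Because $\mathrm{Unif}(B^d)$ is rotationally invariant, I may apply an orthogonal transformation sending the plane $\mathrm{span}(u, v)$ to $\mathrm{span}(e_1, e_2)$. The two half-space conditions only constrain the projection $\pi(x)$ of $x$ onto this plane. A standard marginalization argument (integrating out the orthogonal complement, or equivalently pushing the uniform ball measure forward by the projection) shows that the projected measure is itself rotationally invariant on the unit disk. Therefore the probability in question equals the angular fraction of the disk lying in the intersection of the two half-planes $\{u \cdot y > 0\}$ and $\{v \cdot y > 0\}$.

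Finally I would read off the angle. The intersection of two half-planes through the origin whose inward normals make angle $\alpha$ is a wedge of aperture $\pi - \alpha$, where
\begin{equation*}
\alpha \;=\; \arccos\!\left(\frac{u \cdot v}{\|u\|_2\,\|v\|_2}\right) \;=\; \arccos\!\left(\frac{(\theta_1-\theta_2) \cdot (\hat{\theta}_1-\hat{\theta}_2)}{\|\theta_1-\theta_2\|_2\,\|\hat{\theta}_1-\hat{\theta}_2\|_2}\right).
\end{equation*}
Dividing by $2\pi$ gives $(\pi - \alpha)/(2\pi) = \tfrac{1}{2}\bigl(1 - \alpha/\pi\bigr)$, which is exactly the claimed formula.

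The only mildly delicate step is the reduction to two dimensions: one must justify that the projection of $\mathrm{Unif}(B^d)$ onto a 2-plane has a rotationally invariant density on the disk (it is supported on the disk and absolutely continuous, with density depending only on the radius by symmetry). Everything else is elementary half-plane geometry, and boundary events $\{u\cdot x = 0\}$ have measure zero so strict vs.\ non-strict inequalities do not matter.
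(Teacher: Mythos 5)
Your argument is correct and is essentially the paper's own proof: both project $x$ onto the plane spanned by $\theta_1-\theta_2$ and $\hat{\theta}_1-\hat{\theta}_2$, interpret the two indicators as half-plane (semicircle) conditions, and read the probability off as the angular fraction $(\pi-\alpha)/(2\pi)$ of the wedge where the two half-planes intersect. You simply spell out the rotational-invariance/marginalization step that the paper leaves implicit, which is a welcome clarification but not a different route.
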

\begin{proof}
    Consider the plane spanned by vector $\theta_1-\theta_2$ and $\hat{\theta}_1-\hat{\theta}_2$ and the projection of $x$ to this plane, the two indicator function requires the angle $<Px, \theta_1 - \theta_2>$ and angle $<Px, \hat{\theta}_1 - \hat{\theta}_2>$ to be smaller than $\frac{\pi}{2}.$ 
    Evaluating the expectation over $\mathcal{X}$ is equivalent to evaluating the intersection region of two semicircles. Therefore the result is $\frac{\pi - \arccos (\frac{(\theta_1 - \theta_2) \cdot (\hat{\theta}_1 - \hat{\theta}_2)}{ \|\theta_1-\theta_2\|_2 \|\hat{\theta}_1-\hat{\theta}_2\|_2})}{2 \pi}$. 
\end{proof}

\begin{lemma}\label{lemma:equality_2_for_mr}
Let $\Delta \theta = \theta_1 - \theta_2, \Delta \hat{\theta} = \hat{\theta}_1 - \hat{\theta}_2$.
When the coordinates of vector $\Delta \theta$ are ordered by absolute value: 
$1 \geq |\Delta \theta_1| \geq |\Delta \theta_2| \geq \dots \geq |\Delta \theta_d|.$ 
Then we have the following equality:
\begin{equation*}
    \sup_{\Delta \hat{\theta} \in \{-1, 0, 1\}^d} \frac{\Delta \theta \cdot \Delta \hat{\theta}}{\|\Delta \theta\|_2 \|\Delta \hat{\theta}\|_2} = \sup_{1 \leq j \leq d} \{ \frac{\sum_{i=1}^j |\Delta \theta_j|}{\sqrt{j} \|\Delta \theta\|_2}\}.
\end{equation*}
\end{lemma}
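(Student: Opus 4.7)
The plan is to reduce the combinatorial supremum over $\{-1,0,1\}^d$ to a one-parameter optimization over the size of the support of $\Delta\hat{\theta}$. First I would exploit sign symmetry: for each coordinate $i$, write $s_i = \Delta\hat{\theta}_i \cdot \mathrm{sgn}(\Delta\theta_i) \in \{-1,0,1\}$, so that
\begin{equation*}
\Delta\theta \cdot \Delta\hat{\theta} = \sum_{i=1}^d |\Delta\theta_i|\, s_i, \qquad \|\Delta\hat{\theta}\|_2^2 = \sum_{i=1}^d s_i^2.
\end{equation*}
The denominator depends only on $\sum s_i^2$, which is blind to signs, while the numerator is strictly larger when $s_i = +1$ than when $s_i = -1$. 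Hence any maximizer satisfies $s_i \in \{0,1\}$, and we may restrict attention to choosing a subset $J \subseteq \{1,\dots,d\}$ with $s_i = \mathbf{1}_{\{i \in J\}}$.

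Next I would condition on the cardinality $k = |J|$. For fixed $k$, the problem becomes
\begin{equation*}
\sup_{|J|=k} \frac{\sum_{i \in J} |\Delta\theta_i|}{\sqrt{k}\,\|\Delta\theta\|_2},
\end{equation*}
whose denominator is constant in $J$, so I need only maximize $\sum_{i\in J}|\Delta\theta_i|$ over size-$k$ subsets. By an elementary rearrangement argument (swapping any element of $J$ with a larger-absolute-value element outside $J$ can only increase the sum), the optimum is attained by taking the $k$ largest coordinates of $|\Delta\theta|$. Under the assumed ordering $|\Delta\theta_1|\ge|\Delta\theta_2|\ge\dots\ge|\Delta\theta_d|$, this is exactly $J=\{1,2,\dots,k\}$, yielding the value $\sum_{i=1}^k |\Delta\theta_i|/(\sqrt{k}\,\|\Delta\theta\|_2)$.

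Finally, taking the supremum over $k\in\{1,2,\dots,d\}$ recovers the right-hand side of the claim. The empty support $k=0$ can be discarded since it gives a degenerate $0/0$ and is dominated by any $k\ge 1$ for which $|\Delta\theta_k|>0$; if all coordinates vanish then both sides are trivially zero and the identity holds vacuously.

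The main obstacle I anticipate is a bookkeeping one rather than a deep one: verifying carefully that the reduction $s_i \in \{-1,0,1\} \mapsto s_i \in \{0,1\}$ is valid when some $\Delta\theta_i = 0$ (in which case the sign is ill-defined, but the contribution to both numerator and the inner product through that coordinate is zero, so $s_i$ may be set to $0$ without loss), and that the rearrangement step is strict enough to justify ``pick the top $k$'' rather than merely ``some top-$k$-valued subset.'' Both points are straightforward but deserve an explicit line in the formal write-up.
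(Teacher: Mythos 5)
Your proposal is correct and follows essentially the same route as the paper: the paper likewise conditions on the norm $\|\Delta\hat\theta\|_2=\sqrt{j}$ (equivalently your support size $k$), bounds $\Delta\theta\cdot\Delta\hat\theta\le\sum_{i=1}^j|\Delta\theta_i|$ by the top-$j$ coordinates, and matches this with the explicit prefix candidates $\mathbf{e}_1+\cdots+\mathbf{e}_j$ signed to agree with $\Delta\theta$. Your sign-alignment and rearrangement steps just spell out the paper's inequality in more detail, and your handling of the zero-coordinate and empty-support edge cases is a harmless refinement.
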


\begin{proof}
    By the definition of the supremum, iterate over the list $\Delta \hat{\theta} \in [\mathbf{e}_1, \mathbf{e}_1+\mathbf{e}_2, \dots, \mathbf{e}_1+\mathbf{e}_2+\cdots+\mathbf{e}_d]$, $\mathbf{e}_i$ is the unit vector with the same sign as $\Delta \theta_i$, we know 
    \begin{equation*}
        \sup_{\Delta \hat{\theta} \in \{-1, 0, 1\}^d} \frac{\Delta \theta \cdot \Delta \hat{\theta}}{\|\Delta \theta\|_2 \|\Delta \hat{\theta}\|_2} \geq \sup_{1 \leq j \leq d} \{ \frac{\sum_{i=1}^j |\Delta \theta_j|}{\sqrt{j} \|\Delta \theta\|_2}\}.
    \end{equation*}
    
    Now we show the $\leq$ part. We show that when the $\Delta \theta$'s coordinates are ordered, the optimal $\Delta \hat{\theta}$ is of the form
    \begin{equation*}
        (\textrm{sign}(\Delta \theta_1), \dots, \textrm{sign}(\Delta \theta_j), 0, \dots, 0).
    \end{equation*}
    
    For any $\Delta \hat{\theta}$ with norm $\sqrt{j}$, 
    \begin{equation*}
        \Delta \theta \cdot \Delta \hat{\theta} \leq \sum_{i=1}^j |\Delta \theta_j|.
    \end{equation*}
    Therefore,
    \begin{equation*}
        \sup_{\Delta \hat{\theta} \in \{-1, 0, 1\}^d} \frac{\Delta \theta \cdot \Delta \hat{\theta}}{\|\Delta \theta\|_2 \|\Delta \hat{\theta}\|_2} = \sup_{j} \sup_{|\Delta \hat{\theta}| = \sqrt{j}, } \frac{\Delta \theta \cdot \Delta \hat{\theta}}{\|\Delta \theta\|_2 \|\Delta \hat{\theta}\|_2}  \leq \sup_{1 \leq j \leq d} \{ \frac{\sum_{i=1}^j |\Delta \theta_j|}{\sqrt{j} \|\Delta \theta\|_2}\}.
    \end{equation*}
\end{proof}

\begin{lemma}\label{lemma:inequality_1_for_mr}
\begin{align*}
    \inf_{\theta_1, \theta_2} \sup_{\hat{\theta}_1, \hat{\theta}_2} \bigg [ 1 - \frac{\arccos (\frac{(\theta_1 - \theta_2) \cdot (\hat{\theta}_1 - \hat{\theta}_2)}{ \|\theta_1-\theta_2\|_2 \|\hat{\theta}_1-\hat{\theta}_2\|_2}) }{ \pi } \bigg ] \leq 1 - \frac{\arccos (\frac{1}{\sqrt{\sum_{j=1}^d (\sqrt{j} - \sqrt{j-1})^2}}) }{\pi}.
\end{align*}
\end{lemma}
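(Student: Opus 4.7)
The plan is to establish the upper bound by exhibiting an explicit witness $(\theta_1, \theta_2)$ whose induced inner supremum equals the right-hand side exactly; the infimum over all admissible pairs is then automatically no larger. To begin, I would observe that the map $x \mapsto 1 - \arccos(x)/\pi$ is strictly increasing on $[-1, 1]$, so the claimed inequality is equivalent to the purely geometric statement
\begin{equation*}
\inf_{\theta_1, \theta_2 \in [0,1]^d}\; \sup_{\hat{\theta}_1, \hat{\theta}_2 \in \{0,1\}^d} \frac{(\theta_1-\theta_2) \cdot (\hat{\theta}_1-\hat{\theta}_2)}{\|\theta_1-\theta_2\|_2 \|\hat{\theta}_1-\hat{\theta}_2\|_2} \leq \frac{1}{\sqrt{\sum_{j=1}^d (\sqrt{j} - \sqrt{j-1})^2}}.
\end{equation*}
Writing $\Delta\theta = \theta_1 - \theta_2$ and noting $\Delta\hat\theta = \hat\theta_1 - \hat\theta_2 \in \{-1, 0, 1\}^d$, Lemma~\ref{lemma:equality_2_for_mr} immediately reduces the inner supremum to $\sup_{1 \leq j \leq d} \tfrac{\sum_{i=1}^j |\Delta\theta_i|}{\sqrt{j}\|\Delta\theta\|_2}$ whenever the coordinates of $\Delta\theta$ are sorted in decreasing order of absolute value.

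Next, I would construct the witness by taking $\theta_2 = \mathbf{0}$ and choosing $\theta_1$ so that, after sorting, $|\Delta\theta_j| = \sqrt{j} - \sqrt{j-1}$ for $j = 1, \ldots, d$. Because the sequence $\sqrt{j} - \sqrt{j-1}$ is decreasing in $j$ and bounded above by $\sqrt{1} - \sqrt{0} = 1$, this is a legitimate element of $[0,1]^d$ already presented in sorted order. The key computation is the telescoping identity $\sum_{i=1}^j (\sqrt{i} - \sqrt{i-1}) = \sqrt{j}$, which yields
\begin{equation*}
\frac{\sum_{i=1}^j |\Delta\theta_i|}{\sqrt{j}\,\|\Delta\theta\|_2} \;=\; \frac{\sqrt{j}}{\sqrt{j}\,\sqrt{\sum_{k=1}^d (\sqrt{k}-\sqrt{k-1})^2}} \;=\; \frac{1}{\sqrt{\sum_{k=1}^d (\sqrt{k}-\sqrt{k-1})^2}},
\end{equation*}
independent of $j$. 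Hence for this particular $(\theta_1, \theta_2)$ the supremum over $j$ equals the right-hand side of the target inequality exactly, so the infimum over all $(\theta_1, \theta_2) \in [0,1]^d \times [0,1]^d$ is no larger, and applying the monotone map $x \mapsto 1 - \arccos(x)/\pi$ delivers the stated inequality.

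The only conceptual obstacle is guessing the witness. The natural route is to seek a $\Delta\theta$ that makes $f_j(\Delta\theta) := \tfrac{\sum_{i=1}^j |\Delta\theta_i|}{\sqrt{j}\|\Delta\theta\|_2}$ constant in $j$, since a flat profile minimises the supremum given any fixed cumulative-sum shape. Imposing $\sum_{i=1}^j |\Delta\theta_i| = c\sqrt{j}$ uniquely forces the differences $|\Delta\theta_j| = c(\sqrt{j} - \sqrt{j-1})$, and the overall scale $c$ is irrelevant to the cosine similarity. All remaining steps -- feasibility in $[0,1]^d$, the telescoping sum, and undoing $\arccos$ -- are routine algebra.
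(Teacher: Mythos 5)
Your proof is correct and follows essentially the same route as the paper: the same witness $\theta_1 = (1, \sqrt{2}-\sqrt{1}, \dots, \sqrt{d}-\sqrt{d-1})$, $\theta_2 = \mathbf{0}$, reduction of the inner supremum via Lemma~\ref{lemma:equality_2_for_mr}, the telescoping identity $\sum_{i=1}^j(\sqrt{i}-\sqrt{i-1}) = \sqrt{j}$, and monotonicity of $\arccos$. The added heuristic for how the witness is found (forcing the ratio to be constant in $j$) is a nice expository bonus but does not change the argument.
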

\begin{proof}
    We will show the $\leq$ part by construction. Set $\theta_1 = (1, \sqrt{2} - \sqrt{1}, \dots, \sqrt{d} - \sqrt{d-1}), \theta_2 = (0, 0, \dots, 0)$. 
    According to the Lemma~\ref{lemma:equality_2_for_mr} and the monotonicity of $\arccos$ function, we have
    \begin{align*}
        \inf_{\theta_1, \theta_2} \sup_{\hat{\theta}_1, \hat{\theta}_2} \bigg [ 1 - \frac{\arccos (\frac{(\theta_1-\theta_2) \cdot (\hat{\theta}_1 - \hat{\theta}_2)}{\|\theta_1-\theta_2\|_2 \|\hat{\theta}_1-\hat{\theta}_2\|_2})}{\pi} \bigg ] & \leq \sup_{\hat{\theta}_1, \hat{\theta}_2} \bigg [ 1 - \frac{\arccos (\frac{(\theta_1-\theta_2) \cdot (\hat{\theta}_1 - \hat{\theta}_2)}{\|\theta_1-\theta_2\|_2 \|\hat{\theta}_1-\hat{\theta}_2\|_2})}{\pi} \bigg ] \\
        & = 1 - \frac{\arccos ( \sup_{\hat{\theta}_1, \hat{\theta}_2} \frac{\theta_1 - \theta_2) \cdot (\hat{\theta}_1 - \hat{\theta}_2)}{|\theta_1-\theta_2| |\hat{\theta}_1 - \hat{\theta}_2)|}) }{\pi} \\
        & = 1 - \frac{\arccos(\frac{1}{\sqrt{\sum_{j=1}^d (\sqrt{j} - \sqrt{j-1})^2}}) }{\pi}.
    \end{align*}

\end{proof}

\begin{lemma}\label{lemma:inequality_2_for_mr}
\begin{align*}
    \inf_{\theta_1, \theta_2} \sup_{\hat{\theta}_1, \hat{\theta}_2} \bigg [ 1 - \frac{\arccos (\frac{(\theta_1 - \theta_2) \cdot (\hat{\theta}_1 - \hat{\theta}_2)}{ \|\theta_1-\theta_2\|_2 \|\hat{\theta}_1-\hat{\theta}_2\|_2}) }{ \pi } \bigg ] \geq 1 - \frac{\arccos (\frac{1}{\sqrt{\sum_{j=1}^d (\sqrt{j} - \sqrt{j-1})^2}}) }{\pi}.
\end{align*}
\end{lemma}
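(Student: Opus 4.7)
The plan is to reduce the claim, via Lemma~\ref{lemma:equality_2_for_mr}, to a purely scalar inequality about partial sums of a nonnegative nonincreasing sequence, and then establish that inequality by summation by parts followed by Cauchy-Schwarz. Concretely, since $\arccos$ is monotonically decreasing, the desired lower bound on $1 - \arccos(\cdot)/\pi$ will follow from an upper bound on the inner supremum of the cosine: I will show that for every admissible pair $\theta_1,\theta_2 \in [0,1]^d$,
\begin{equation*}
    \sup_{\hat{\theta}_1,\hat{\theta}_2 \in \{0,1\}^d} \frac{(\theta_1-\theta_2)\cdot(\hat{\theta}_1-\hat{\theta}_2)}{\|\theta_1-\theta_2\|_2\,\|\hat{\theta}_1-\hat{\theta}_2\|_2} \;\geq\; \frac{1}{\sqrt{\sum_{j=1}^d(\sqrt{j}-\sqrt{j-1})^2}},
\end{equation*}
and the conclusion follows by taking infimum over $\theta_1,\theta_2$ and applying $\arccos$.

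First I would invoke Lemma~\ref{lemma:equality_2_for_mr} to rewrite the left-hand side: letting $v_j = |(\theta_1-\theta_2)|_{(j)}$ be the coordinates of $|\Delta\theta|$ sorted in decreasing order and $S_j := v_1+\cdots+v_j$, the supremum equals $M/\|v\|_2$ where $M := \max_{1 \leq j \leq d} S_j/\sqrt{j}$. So the task reduces to proving, for any nonincreasing sequence $v_1 \geq v_2 \geq \cdots \geq v_d \geq 0$,
\begin{equation*}
    \|v\|_2 \;\leq\; M \cdot \sqrt{\sum_{j=1}^d (\sqrt{j}-\sqrt{j-1})^2}.
\end{equation*}

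The key manipulation is a double summation by parts. Writing $v_j^2 = v_j(S_j - S_{j-1})$ and using Abel's summation together with the monotonicity $v_j \geq v_{j+1}$, I will derive the identity
\begin{equation*}
    \sum_{j=1}^d v_j^2 \;=\; v_d S_d + \sum_{j=1}^{d-1} S_j\,(v_j - v_{j+1}).
\end{equation*}
All terms on the right are nonnegative, so the bound $S_j \leq M\sqrt{j}$ can be inserted termwise. A second summation by parts collapses the resulting expression into
\begin{equation*}
    \sum_{j=1}^d v_j^2 \;\leq\; M \sum_{j=1}^d (\sqrt{j}-\sqrt{j-1})\,v_j,
\end{equation*}
after which Cauchy-Schwarz yields $\|v\|_2^2 \leq M \sqrt{\sum_j(\sqrt{j}-\sqrt{j-1})^2}\,\|v\|_2$, and dividing by $\|v\|_2$ gives the claim.

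The main technical obstacle is verifying the correct form of the two Abel-type rearrangements and ensuring that the boundary term $v_d\sqrt{d}$ combines cleanly with the telescoped sum to produce exactly $\sum_{j=1}^d(\sqrt{j}-\sqrt{j-1})v_j$; this is the step where the specific weights $\sqrt{j}-\sqrt{j-1}$ emerge and match the extremal construction used in Lemma~\ref{lemma:inequality_1_for_mr}, confirming the bound is tight. Once this algebraic identity is in hand, the Cauchy-Schwarz step is routine and the $\arccos$ monotonicity finishes the proof.
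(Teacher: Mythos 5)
Your proposal is correct, and it takes a genuinely different route from the paper. The paper argues by contradiction: it assumes some $\theta_1^*,\theta_2^*$ achieves a cosine supremum $C_0 < C_d := 1/\sqrt{\sum_{j=1}^d(\sqrt{j}-\sqrt{j-1})^2}$ and then performs an iterative exchange argument, repeatedly adjusting pairs of coordinates of the sorted $\Delta\theta^*$ to force them onto the profile $(\sqrt{k}-\sqrt{k-1})C_0$ while preserving the $\ell_2$ norm, until a coordinate exceeding its target produces a contradiction with the definition of $C_0$. Your argument is instead direct: after the same reduction via Lemma~\ref{lemma:equality_2_for_mr} to the sorted vector $v$ with partial sums $S_j$ and $M=\max_j S_j/\sqrt{j}$, one Abel summation gives $\sum_j v_j^2 = v_d S_d + \sum_{j=1}^{d-1} S_j(v_j-v_{j+1})$, the nonnegativity of the increments lets you insert $S_j \le M\sqrt{j}$ termwise, a second Abel summation collapses the bound to $M\sum_{j=1}^d(\sqrt{j}-\sqrt{j-1})v_j$ (the boundary term indeed merges cleanly, since $\sqrt{d}v_d - \sqrt{d-1}v_d = (\sqrt{d}-\sqrt{d-1})v_d$), and Cauchy--Schwarz plus division by $\|v\|_2$ yields $M/\|v\|_2 \ge C_d$; monotonicity of $\arccos$ then transfers this to the stated accuracy bound uniformly in $\theta_1,\theta_2$, so the infimum obeys it too. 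What your route buys is a short, fully rigorous proof that also makes transparent where the weights $\sqrt{j}-\sqrt{j-1}$ come from and why equality holds exactly for the construction of Lemma~\ref{lemma:inequality_1_for_mr} (there $S_j=\sqrt{j}$, so $M=1$); the paper's exchange argument exhibits the extremal profile more explicitly but leaves several verification steps ("the assumed inequalities continue to hold") only sketched, so your version is arguably the cleaner one to include.
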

\begin{proof}
    Proof by contradiction.
    Assume there exists $\theta_1^*, \theta_2^*$ such that the LHS is smaller than $1 - \frac{\arccos (\frac{1}{\sqrt{\sum_{j=1}^d (\sqrt{j} - \sqrt{j-1})^2}}) }{\pi}$, by monotonicity of cosine function we know 

    \begin{equation*}
        C_0 := \sup_{\htheta_1, \htheta_2} \frac{(\theta_1^* - \theta_2^*) \cdot (\htheta_1 - \htheta_2)}{|\theta_1^* - \theta_2^*| |\htheta_1 - \htheta_2|} < \frac{1}{\sqrt{\sum_{j=1}^d (\sqrt{j} - \sqrt{j-1})^2}} =: C_d.
    \end{equation*}

    Denote $\Delta \theta^* = \theta_1^* - \theta_2^*$. Without loss of generality, we assume $\Delta \theta^* \in [0, 1]^d, \|\Delta \theta^*\|_2 =1 $ and
    \begin{equation*}
        \Delta \theta_1^* \geq \Delta \theta_2^* \geq \dots \geq \Delta \theta_d^*.
    \end{equation*}
    Starting from $\Delta \theta_1^*, \dots, \Delta \theta_d^*$, we construct another feasible solution $\Delta \theta_1, \dots, \Delta \theta_d$ without increasing the corresponding supremum value beyond $C_0$. However, if we compare $\Delta \theta_1, \dots, \Delta \theta_d$ element-wisely with $(\sqrt{k} - \sqrt{k-1})C_0, 1 \leq k \leq d$, the first $\Delta \theta_k$ that is not equal to $(\sqrt{k} - \sqrt{k-1})C_0$ is greater than $(\sqrt{k} - \sqrt{k-1})C_0$, this gives us the contradiction to the $C_0$'s definition. 
    
    Also notice $\sum_{i=1}^d (\Delta \theta_i^*)^2 = 1, C_0 < C_d$, there always exists index $k$ satisfying $\Delta \theta_k^* > (\sqrt{k} - \sqrt{k-1})C_0$. 
    
    Assume the first $\theta_i$ that is not equal to $(\sqrt{i} - \sqrt{i-1})C_0$ is still smaller than $(\sqrt{i} - \sqrt{i-1})C_0$. By the above paragraph, we can find the first $\theta_{k}, k > i$ with $\theta_k^* > (\sqrt{k} - \sqrt{k-1})C_0$. 
    
    Then we adjust $(\theta_i^*, \theta_k^*)$ to $((\sqrt{i} - \sqrt{i-1})C_0, \sqrt{(\theta_i^*)^2 + (\theta_k^*)^2 - (\sqrt{i} - \sqrt{i-1})^2C_0^2})$. We can verify that the assumed inequalities continue to hold. (There are cases for $(\theta_i^*)^2 + (\theta_k^*)^2 - (\sqrt{i} - \sqrt{i-1})^2C_0^2 \leq (\sqrt{k} - \sqrt{k-1})^2C_0^2$, then we just end this modification with $(\sqrt{(\theta_i^*)^2 + (\theta_k^*)^2 - (\sqrt{k} - \sqrt{k-1})^2C_0^2}, (\sqrt{k} - \sqrt{k-1})C_0)$ and then repeat the procedure. )

    As the above procedure repeats, number $\#\{k | \theta_k = (\sqrt{k} - \sqrt{k-1})C_0\}$ is strictly increased. When it stopped, the first non-$(\sqrt{k} - \sqrt{k-1})C_0$ term is larger than the $(\sqrt{k} - \sqrt{k-1})C_0$ and this gives us the contradiction. 
\end{proof}

\end{document}